\documentclass{hld2025} %

\usepackage{microtype}
\usepackage{graphicx}
\usepackage{booktabs}

\usepackage{subcaption}
\usepackage{float}

\usepackage{hyperref}

\definecolor{ForestGreen}{RGB}{34,139,34}

\newcommand{\ODE}{{}}%

\newcommand{\T}{\mathsf{T}}

\def\EE{{\mathbb{E}}}

\numberwithin{equation}{section}

\usepackage{amsmath,amsfonts,bm}

\def\eqref#1{equation~\ref{#1}}

\def\1{\bm{1}}

\def\mI{{\bm{I}}}

\DeclareMathAlphabet{\mathsfit}{\encodingdefault}{\sfdefault}{m}{sl}
\SetMathAlphabet{\mathsfit}{bold}{\encodingdefault}{\sfdefault}{bx}{n}

\def\[{\left[}
\def\]{\right]}
\def\({\left(}
\def\){\right)}

\newcommand{\Sh}{\hat{\Sigma}}
\newcommand{\Rh}{\hat{R}}
\newcommand{\df}{d\text{f}}
\newcommand{\Tr}{\text{Tr}}

\usepackage{pifont}

\title[A solvable generative model with a linear, one-step denoiser]{A solvable generative model with a linear, one-step denoiser}

\hldauthor{%
\Name{Indranil Halder} \Email{ihalder@g.harvard.edu}\\
\addr {Harvard University, Cambridge, MA, USA} 
}

\begin{document}

\maketitle

\begin{abstract}%
We develop an analytically tractable single-step diffusion model based on a linear denoiser and present an explicit formula for the Kullback-Leibler divergence between the generated and sampling distribution, taken to be isotropic Gaussian, showing the effect of finite diffusion time and noise scale. Our study further reveals that the monotonic fall phase of Kullback-Leibler divergence begins when the training dataset size reaches the dimension of the data points.  Finally, for large-scale practical diffusion models, we explain why a higher number of diffusion steps enhances production quality based on the theoretical arguments presented before. 
\end{abstract}

\section{Introduction}
In recent years, generative artificial intelligence has made tremendous advancements - be it image, audio, video, or text domains—on an unprecedented scale.  Diffusion models \cite{sohldickstein2015deepunsupervisedlearningusing, song2019generative, ho2020denoising, kadkhodaie2020solving, song2021denoising} are among the most successful frameworks \citep{ramesh2022hierarchicaltextconditionalimagegeneration, rombach2022highresolutionimagesynthesislatent, saharia2022photorealistictexttoimagediffusionmodels, sora}. The quality of the generated images can be enhanced through guided diffusion at the cost of reduced diversity \citep{dhariwal2021diffusionmodelsbeatgans,ho2022classifierfreediffusionguidance, wu2024theoreticalinsightsdiffusionguidance, bradley2024classifierfreeguidancepredictorcorrector, chidambaram2024doesguidancedofinegrained}. 
Also, experimentally it is observed that increasing diffusion steps leads to more visually appealing images \cite{karras2022elucidatingdesignspacediffusionbased}.
 Theoretically understanding this phenomena and generalization ability \cite{yoon2023diffusion, zhang2023emergence, kadkhodaie2023generalization,carlini2023extracting, somepalli2022diffusion, favero2025compositional, okawa2023compositional,park2024emergence, kamb2024analytic,li2024critical} of the diffusion models is a challenging task.
Keeping this goal in mind, we introduce and study a linear denoiser based generative model that is analytically tractable and features some of the properties of a single denoising step in a realistic diffusion model.

\subsection{Our contributions}
Our main contributions to this paper are as follows:
\begin{enumerate}
    \item We define a linear denoiser based generative model. Within the framework of the model, we present explicit formula for the Kullback-Leibler divergence between generated and sampling distribution, taken to be isotropic Gaussian, showing the effect of finite diffusion time and noise scale. In particular, our formula shows we can recover the sampling distribution from the generative model only if the noise scale is small enough compared to certain function of diffusion time. 
    \item We establish that  aforementioned  Kullback-Leibler divergence starts to decrease monotonically with addition of new training data when the size of the training set reaches the dimension of the data points as opposed to an exponential scale indicated by the curse of dimensionality. 
    \item For a realistic diffusion model on Gaussian mixture  training set, we quantify the fact that larger diffusion step leads to better production quality. In addition, we show that the theorem that we proved before  gives us theoretical explanation of this fact. 
\end{enumerate}

\subsection{Related works}

The main theoretical setup of our work is that of higher dimensional statistics, i.e, when the dimension and number of train data size both scale large simultaneously staying proportional to each other.  In the context of  linear regression \cite{krogh1992generalization, dicker2016minimax,dobriban2018prediction, nakkiran2019more, advani2020high, hastie2022surprises}, kernel regression \cite{sollich1998learning, sollich2002learning, bordelon2020spectrum, canatar2021spectral, spigler2020asymptotic, simon2023eigenlearning, loureiro2021learning}, and random feature models \cite{hastie2022surprises, louart2018random, mei2022generalization, adlam2020neural, d2020double, d2020triple, loureiro2021learning, bahri2021explaining, zavatone2023learning,dhifallah2020precise, hu2022universality, maloney2022solvable, bach2024high} method of deterministic equivalence has been used extensively for discussions of higher dimensional statistics. 

Traditionally diffusion models are  trained with datasets whose size is much smaller compared to the exponential of the data dimension. 
For example a comonly used dataset for training diffusion models is laion-high-resolution that contains around $10^8$ images of dimension $1024\times 1024$.
Motivated by these facts, in this paper we study a specific linear denoiser based generative model that captures a single diffusion step of the realistic diffusion model using the method of deterministic equivalence. For stochastic differential equation based models, there are notable works discussing  bound on the distance between sampling and generated distribution under the assumption of a given score estimation error \cite{lee2023convergencescorebasedgenerativemodeling, chen2023samplingeasylearningscore, chen2023improvedanalysisscorebasedgenerative, benton2024nearlydlinearconvergencebounds, shah2023learningmixturesgaussiansusing}. On the other hand,  in our work, we focus on the error in denoising for a single step taking into account finite sample size.

\section{A  generative model based on a linear denoiser}\label{secS2}

In this section, we define and study a linear  denoiser based generative model which is analogous to a one step diffusion model. Before explaining the model, we note certain basic facts about the diffusion process based on a finite number of samples. Given $n$ samples  $\rho(x)$ can be approximated by the Dirac delta distribution $\hat{\rho}(0,x)\equiv \frac{1}{n}\sum_{k=1}^n \delta (x-x_k), x \in \mathbb{R}^d$.
The time evolution of the probability distribution under Ornstein-Uhlenbeck diffusion process is given by (see Appendix \ref{appA} for more details)
\begin{equation}\label{app_rho}
\begin{aligned}
    &  \hat{\rho}(t,x)= \frac{1}{n}\sum_{k=1}^n  \ \mathcal{N}(x|x_k e^{-t},1-e^{-2t})\\
\end{aligned}   
\end{equation}
This motivates us to sample $Y_k, k=1,2,..,n$ from the underlying distribution $\rho(x)$ and add noise $ Z_k \sim  \mathcal{N}(0,\mI_d)$ to it to obtain noisy samples
\begin{equation}\label{add_noise}
    X_k=e^{-T}Y_k+\sqrt{ \Delta_T} Z_k, \Delta_T=\lambda\delta_T=\lambda (1-e^{-2T})
\end{equation}
Here $T$ is the diffusion time cut-off 
and $\lambda$ is a free hyperparameter that controls the amount of noise added \footnote{This corresponds to scaling the noise term in (\ref{OUdynamics}) by a factor of $\sqrt{\lambda}$.}. 

The denoiser based 
model,  trained on the data above, as input takes a noisy sample $X$ and generates a clean sample $Y$. In this paper, we consider a linear model $Y=\hat{\theta}_0+\hat{\theta}_1 X$ as prototype denoiser for analytical tractability. The parameters $\hat{\theta}_0, \hat{\theta}_1$ are solution to the linear regression problem of predicting $\{Y_k\}$ given $\{X_k\}$ and given by\footnote{A natural generalization of this is to feature kernel regression instead of linear regression.  }
\begin{equation}\label{gendLR}
    \begin{aligned}
       & \hat{\theta}_1^T=(x^Tx)^{-1}x^Ty, \quad \hat{\theta}_0=\hat{Y}-\theta_1 \hat{X}
       & \hat{X}=\frac{1}{n}\sum_{k=1}^n X_k, \quad \hat{Y}=\frac{1}{n}\sum_{k=1}^n Y_k\\
    \end{aligned}
\end{equation}
Here $x,y$ are $n\times d$ dimensional matrices 
whose $k$-th row is $(X_k-\hat{X})^T,(Y_k-\hat{Y})^T$ respectively.

To generate samples from the trained diffusion model we first draw $X$ from  $ \mathcal{N}(\mu_X,\sigma_X^2\mI_d)$ where
with
\begin{equation}
    \mu_X=e^{-T}\hat{Y}, \quad \sigma_X^2=e^{-2T}\frac{1}{nd}\sum_{k=1}^n |Y_k-\hat{Y}|^2+\Delta_T
\end{equation}
and then use the diffusion model to predict corresponding $Y=\hat{\theta}_0+\hat{\theta}_1 X$.  The generated probability distribution for a given  set $\{(X_k,Y_k), k=1,2,..,n\}$ is
\begin{equation}\label{rhoG}
    \begin{aligned}
       & \rho_G(Y|\{(X_k,Y_k)\})
        =\mathcal{N}(Y|\hat{\theta}_0+\hat{\theta}_1\mu_X,\sigma_X^2\hat{\theta}_1^T\hat{\theta}_1)\\
    \end{aligned}
\end{equation}

\subsection*{Effect of finite diffusion time}

In this subsection, we study the effect of finite diffusion time $T$ and noise scale $\lambda$ on generalization error for the linear diffusion model defined above. We restrict our discussion to sampling from isotropic Gaussian distribution  $\rho=\mathcal{N}(\mu, \sigma^2\mI_d)$. The distance between  the underlying distribution $\rho$ and the generated $\rho_G$ distribution from the diffusion model as given in (\ref{rhoG}) can be  measured in terms of Kullback–Leibler divergence. Further it can be decomposed as $\text{KL}(\rho||\rho_G)=\text{KL}_{\text{mean}}+\text{KL}_{\text{var}}\geq \text{KL}_{\text{var}} $. Where the contributions $\text{KL}_{\text{mean}},\text{KL}_{\text{var}}$ are related to the difference between generated and the underlying distribution in mean and variance
\begin{equation}\label{KL-parts}
    \begin{aligned}
    &\text{KL}_{\text{mean}}(\rho_G|\rho)
     =\frac{1}{2\sigma^2} (\mu-\hat{\mu}_G)^T(\mu-\hat{\mu}_G), \quad   
      \text{KL}_{\text{var}}(\rho_G|\rho)=\frac{1}{4}\Tr\bigg(\bigg(\frac{\hat{\Sigma}_G}{\sigma^2}-I\bigg)^2\bigg)
    \end{aligned}
\end{equation}
Here $\hat{\mu}_G=\hat{\theta}_0+\hat{\theta}_1\mu_X, \hat{\Sigma}_G=\sigma_X^2\hat{\theta}_1^T\hat{\theta}_1$.
The inequality follows because $\text{KL}_{\text{mean}}$ is a positive semi-definite quantity.  We numerically show in figure \ref{fig1} that there exists a regime of small $\lambda$ where $\text{KL}_{\text{mean}}\ll \text{KL}_{\text{var}}$ making the inequality above an approximate equality.

\begin{figure*}[t]
    \centering
    \makebox[0.8\textwidth][c]{%
        \begin{tabular}{cc}
            \includegraphics[height=1.2in]{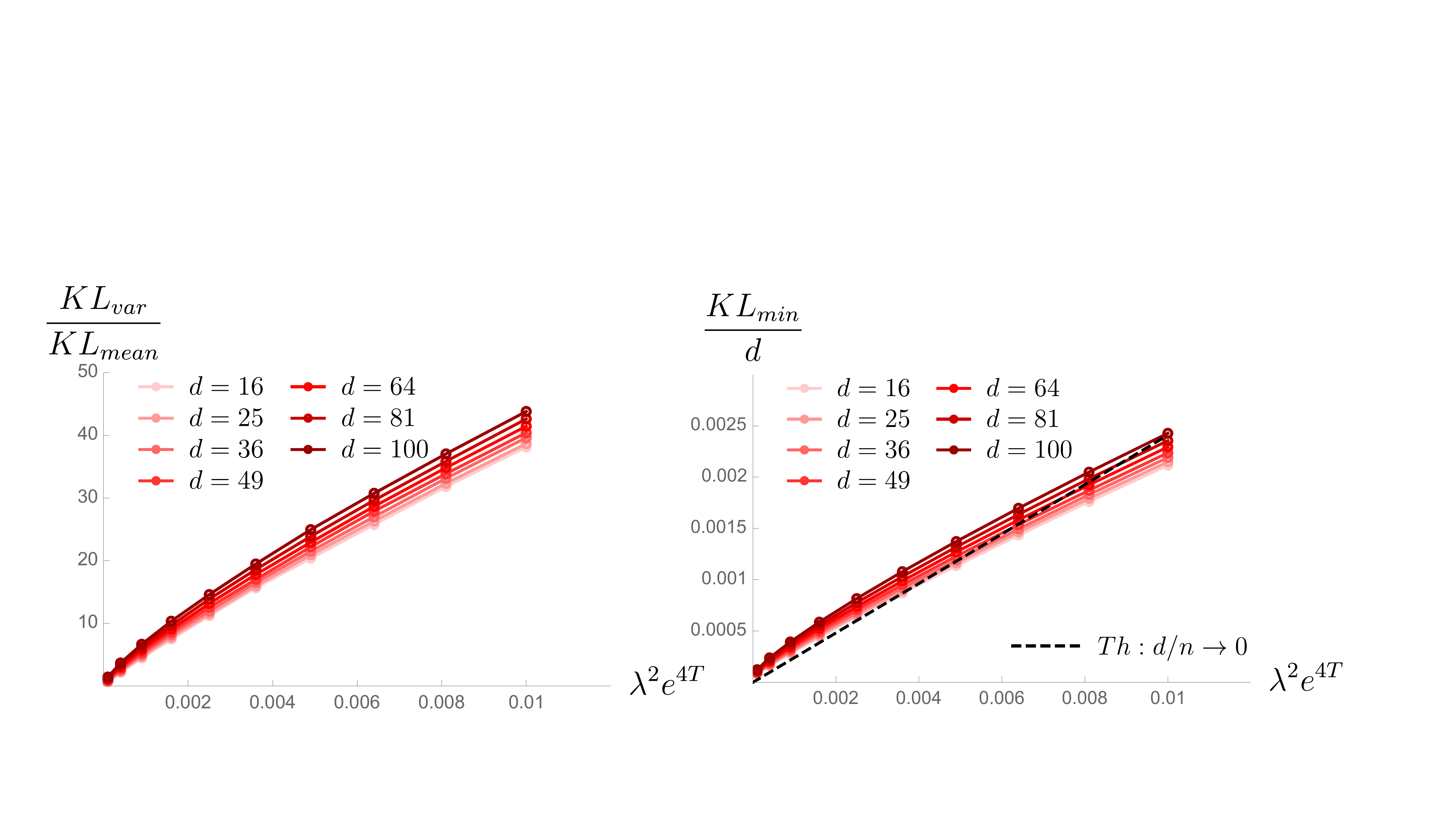} &
            \includegraphics[height=1.2in]{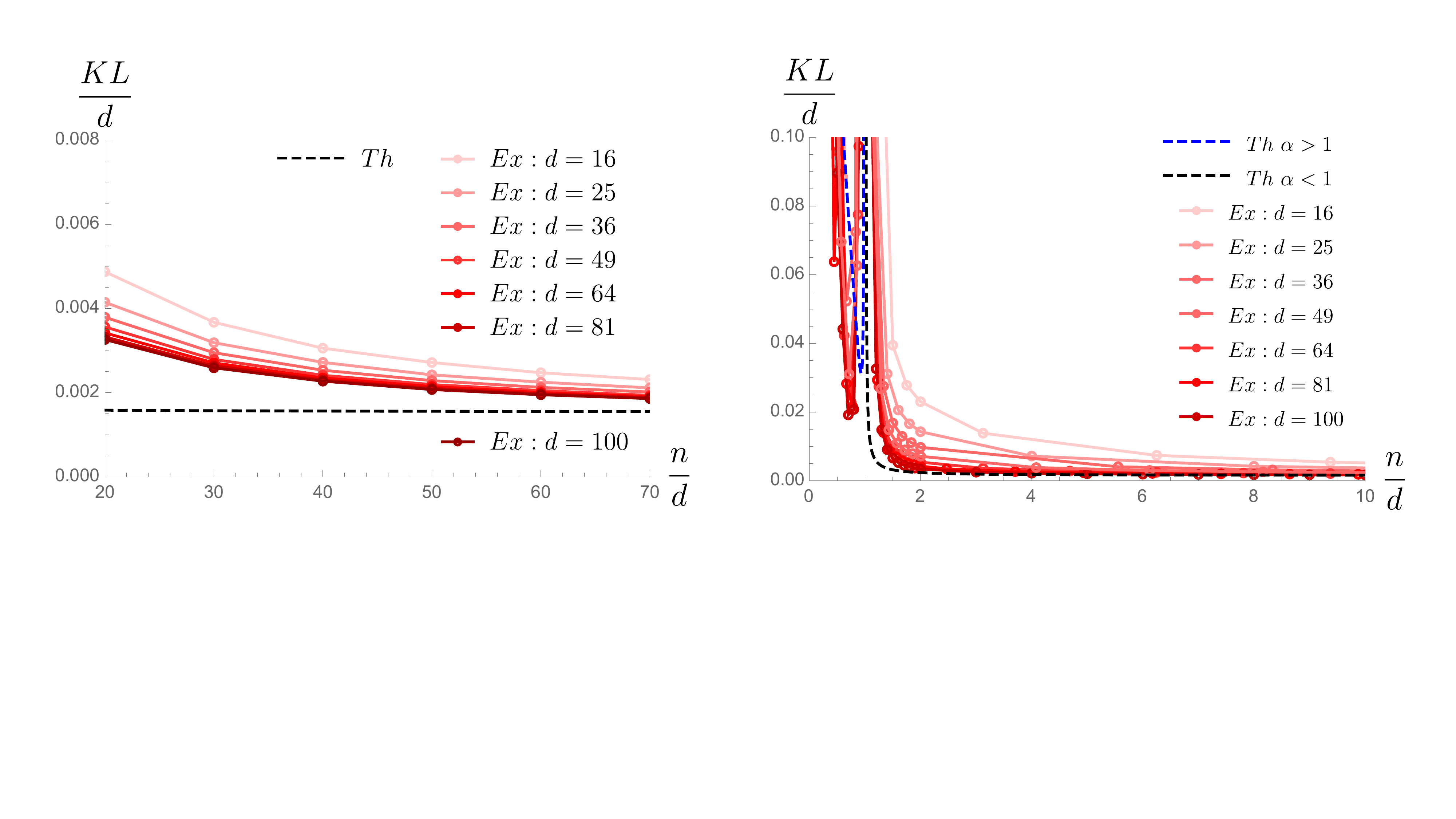} \\
            \multicolumn{1}{c}{\footnotesize (a)} &
            \multicolumn{1}{c}{\footnotesize (b)} \\
        \end{tabular}
    }
    \caption{In figure (a), we plot various contributions to KL divergence between the generated data from the linear denoiser based generative model and sampling distribution taken to be an isotropic Gaussian  of mean $\mu=10$ and diagonal standard deviation $ \sigma=1$ of dimension $d$. We have fixed the diffusion time cut-off $T=2$ and varied the noise scale $\lambda=\hat{\lambda}e^{-4}$. The train dataset size  $n=10^4 \gg d$. From the plot on left we see there exists a regime of parameters when  $\text{KL}_{\text{mean}}\ll \text{KL}_{\text{var}}$. This justifies our assumption of ignoring $\text{KL}_{\text{mean}}$ in analytic calculation presented in appendix \ref{appkl}.  The plot on the right compares the numerical results against the theoretical result and shows that the minimum KL divergence attainable in $d/n \to 0$ limit scales quadratically with the noise parameter $\hat{\lambda}=\lambda e^{2T}/\sigma^2$ for small values of the later. In figure (b), we have fixed the diffusion time to be $T=2$ with noise scale $\lambda=0.8 e^{-4}$.  On the left, we plot KL divergence between the generated and sampling distribution after truncation to the quadratic order in $\lambda$, as given in (\ref{KL2}), in the regime of small $d/n$ comparing experimental data  (in red) and theoretical result for the lower bound as given in (\ref{th1}) (in black). The numerical results on the right plot shows that KL divergence between the generated and underlying distribution scales as $d$ times solely a function of $\alpha=d/n$ without additional $n,d$ dependence as we take $n,d$ large keeping their ratio fixed. This fact is analytically established in (\ref{th1}),(\ref{th2}) and the analytical expression is plotted in black for $\alpha<1$ and blue $\alpha>1$.}\label{fig1}
\end{figure*}

\begin{theorem}
\label{thm:bigtheorem}
When the linear diffusion model described above is trained on $n$ samples from isotropic Gaussian distribution $\rho=\mathcal{N}(\mu, \sigma^2\mI_d)$ in the limit of $n\to \infty$ holding $\alpha=d/n$ fixed, following lower bound on the KL divergence between generated and sampling distribution holds
$\text{KL}(\rho||\rho_G)\geq \text{KL}_{\text{var}} $ 
If further we restrict ourselves to small noise scale  $\lambda=\hat{\lambda} \sigma^2e^{-2T}, \hat{\lambda}\ll 1 $, then an explicit expression for the statistical expectation value of $\text{KL}_{\text{var}}$ can be obtained order by order in $\hat{\lambda}$ based on the theory of deterministic equivalence. More specifically for $\alpha<1$ we have
\begin{equation}\label{th1}
    \begin{aligned}
       \langle  \text{KL}_{\text{var}} \rangle=&\frac{d\alpha  \hat{\lambda } e^{-4 T} \left(e^{2 T}-1\right)}{2 (1-\alpha)}+\frac{d\hat{\lambda }^2 e^{-8 T} \left(e^{2 T}-1\right)^2}{4 (1-\alpha)^3} \left(\alpha ^2+(1-\alpha)^3 e^{4 T}+4 \alpha  (1-\alpha)^2 e^{2 T}\right)+\mathcal{O}(\hat{\lambda}^3)
    \end{aligned}
\end{equation}
and for $\alpha>1$ 
\begin{equation}\label{th2}
    \begin{aligned}
        \langle  \text{KL}_{\text{var}} \rangle=& d\frac{\alpha -1}{4 \alpha }+ \frac{d\hat{\lambda } e^{-4 T} \left(e^{2 T}-1\right)}{2 (\alpha -1)}+\frac{d\hat{\lambda }^2 e^{-8 T} \left(e^{2 T}-1\right)^2}{4 (\alpha -1)^3 \alpha } (\alpha ^3+(\alpha -1)^3 e^{4 T} +4 \alpha  (\alpha -1)^2 e^{2 T})+\mathcal{O}(\hat{\lambda}^3)
    \end{aligned}
\end{equation}
\end{theorem}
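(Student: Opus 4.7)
The lower bound $\text{KL}(\rho\|\rho_G)\ge \text{KL}_{\text{var}}$ is immediate from the decomposition (\ref{KL-parts}) since $\text{KL}_{\text{mean}}\ge 0$; the substance of the theorem is the explicit computation of $\langle\text{KL}_{\text{var}}\rangle$ to $O(\hat\lambda^2)$ in both regimes of $\alpha$. My plan has four steps.

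\textbf{Step 1: Algebraic simplification of $\hat\theta_1$.} After the shift $\mu=0$, write the training matrices as $x=e^{-T}y+\sqrt{\Delta_T}\,z$ with rows of $y,z$ i.i.d.\ $\mathcal{N}(0,\sigma^2 I)$ and $\mathcal{N}(0,I)$ respectively. Solving for $y$ and substituting into (\ref{gendLR}) gives, in the underparametrized regime $\alpha<1$,
\begin{equation*}
\hat\theta_1^{T} = e^T\bigl(I-\sqrt{\Delta_T}(x^{T}x)^{-1}x^{T}z\bigr)\equiv e^T(I-B),
\end{equation*}
and in the overparametrized regime $\alpha>1$ (using the minimum-norm pseudoinverse $x^{+}=x^{T}(xx^{T})^{-1}$),
\begin{equation*}
\hat\theta_1^{T}=e^T\bigl(P_x-\sqrt{\Delta_T}\,x^{T}(xx^{T})^{-1}z\bigr),\qquad P_x=x^{T}(xx^{T})^{-1}x,
\end{equation*}
where $P_x$ is the rank-$n$ orthogonal projector onto the row span of $x$. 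This projector is responsible for the $\hat\lambda$-independent bias $d(\alpha-1)/(4\alpha)$ in (\ref{th2}): at $\hat\lambda=0$ one has $M=P_x$, and $\Tr((P_x-I)^2)=d-n$.

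\textbf{Step 2: Reduction to a finite list of trace moments.} With $c=\sigma_X^2 e^{2T}/\sigma^2=1+\hat\lambda(1-e^{-2T})$ (using $\tfrac{1}{nd}\sum|Y_k-\hat Y|^2\to \sigma^2$ at $n\to\infty$) and $M=\hat\Sigma_G/\sigma^2$,
\begin{equation*}
\Tr((M-I)^2)=\Tr\bigl[(c-1)I-c(B+B^{T})+cBB^{T}\bigr]^2.
\end{equation*}
Since $\gamma:=\sqrt{\Delta_T/\rho_X^2}=O(\hat\lambda^{1/2})$ governs the scale of $B$ and $c-1=O(\hat\lambda)$, expansion through $O(\hat\lambda^2)$ needs only $\langle\Tr B\rangle$, $\langle\Tr B^{2}\rangle$, $\langle\Tr BB^{T}\rangle$, $\langle\Tr(BB^{T})^2\rangle$ and $\langle\Tr(B+B^{T})BB^{T}\rangle$. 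Terms with an odd total power of $z$ vanish by Wick's theorem after the $\sqrt{\Delta_T}$-expansion of $(x^{T}x)^{-1}$.

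\textbf{Step 3: Deterministic equivalents from Marchenko--Pastur.} Writing $x^{T}x=e^{-2T}y^{T}y+e^{-T}\sqrt{\Delta_T}(y^{T}z+z^{T}y)+\Delta_T z^{T}z$ and expanding $(x^{T}x)^{-1}$ in powers of $\sqrt{\Delta_T}$ around $e^{2T}(y^{T}y)^{-1}$, each of the required trace moments reduces to a linear combination of
\begin{equation*}
\tau_k:=\lim_{n\to\infty}\tfrac{1}{d}\bigl\langle \Tr(y^{T}y/n)^{-k}\bigr\rangle,\qquad k=1,2,3,
\end{equation*}
together with independent Gaussian averages of polynomials in $z$. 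Since $y^{T}y/n$ has limiting Marchenko--Pastur spectrum with parameter $\alpha$, the $\tau_k$ are rational functions of $\alpha$ obtained from the MP Stieltjes transform and its derivatives at the origin: for $\alpha<1$ one has $\tau_1\propto (1-\alpha)^{-1}$, $\tau_2\propto (1-\alpha)^{-3}$, $\tau_3\propto (1-\alpha)^{-5}$, producing the denominators in (\ref{th1}). For $\alpha>1$ the same procedure applied to $(xx^{T})^{-1}$ yields the $(\alpha-1)^{-k}$ factors of (\ref{th2}).

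\textbf{Step 4: Assembly.} Inserting the trace moments into the expansion of $\Tr((M-I)^2)$ and simplifying with $\gamma^2=\hat\lambda(1-e^{-2T})/c$ produces the two formulas after cancellations.

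\textbf{Main obstacle.} The principal subtlety is that $z$ and $x$ are \emph{not} independent --- $x$ is itself a linear combination of $y$ and $z$ --- so a naive substitution of the deterministic equivalent of $(x^{T}x)^{-1}$ fails to capture the cross terms. A clean treatment uses Gaussian integration by parts (Stein's identity) order by order in $\sqrt{\Delta_T}$ to disentangle the $y^{T}z$ and $z^{T}z$ contributions before applying the MP asymptotics. The bookkeeping at $O(\hat\lambda^2)$ is the most delicate part: matching the precise combinations $\alpha^2+(1-\alpha)^3 e^{4T}+4\alpha(1-\alpha)^2 e^{2T}$ in (\ref{th1}) and its $\alpha\leftrightarrow \alpha-1$ analogue in (\ref{th2}) requires combining a bias contribution from $\langle \Tr B\rangle$, a variance contribution from $\langle \Tr BB^{T}\rangle$, and the next-to-leading resolvent correction of $(x^{T}x)^{-1}$, all of which happen to be of the same order.
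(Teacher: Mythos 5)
Your lower-bound step is correct and is exactly the paper's first observation. For the explicit expansion, however, you have missed the key structural move of the paper's proof, and as a result your plan runs into the very obstacle you flag at the end without resolving it.

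\textbf{What you missed.} You keep the forward model $x = e^{-T} y + \sqrt{\Delta_T}\,z$ and observe that $x$ and $z$ are therefore correlated, which you rightly identify as the ``main obstacle.'' The paper sidesteps this entirely: it replaces the forward model by the \emph{conditional regression} model $Y_k = e^{T} X_k + Z_k$ with $X_k \sim \mathcal{N}(\mu_X, \sigma_X^2 I_d)$ and $Z_k \sim \mathcal{N}(0,\Delta_T I_d)$ \emph{independent} of $X_k$ (eq.~(C.2) in the appendix, with the justification in its footnote that this equivalence holds to leading order in $\hat\lambda$, which suffices for the $\mathcal{O}(\hat\lambda^2)$ claim because $\hat\Sigma_G/\sigma^2 - I$ is itself $\mathcal{O}(\hat\lambda)$ and only its leading term enters the $\hat\lambda^2$ coefficient of $\Tr(\hat\Sigma_G/\sigma^2-I)^2$). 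Once the feature matrix $x$ and the noise $z$ are independent, $\hat\theta_1$ can be split exactly into $e^T(I-\hat R(\hat\Sigma+\hat R)^{-1}) + (\hat\Sigma+\hat R)^{-1}x^T z/n$, the cross-covariances average out by independence and zero mean of $z$, and the deterministic-equivalence machinery (the quantities $\df^a_{\hat\Sigma}$ and the symbols $C_{a,b}$, $B_a$) applies cleanly. The paper also introduces a ridge parameter $\hat R$ throughout and only takes the ridgeless limit $\hat R \to 0$ at the very end, which is what makes the $\alpha>1$ case tractable within the same framework rather than requiring a separate pseudoinverse analysis as in your Step~1.

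\textbf{Why your route, as written, does not close.} Your Step~3 claims the required trace moments ``reduce to a linear combination of $\tau_k$ together with independent Gaussian averages of polynomials in $z$.'' But with your parameterization $z$ appears both explicitly in $B=\sqrt{\Delta_T}(x^Tx)^{-1}x^Tz$ and implicitly inside $(x^Tx)^{-1}$ through $x = e^{-T}y + \sqrt{\Delta_T}z$, so the averages do not factor; Wick pairings between the explicit $z$'s and those hidden in the resolvent expansion produce additional, non-negligible contributions at $\mathcal{O}(\hat\lambda)$ and $\mathcal{O}(\hat\lambda^2)$. You acknowledge this and gesture toward Gaussian integration by parts, but you do not carry it out, and it is precisely this bookkeeping that the paper avoids by remodeling the regression. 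Your sanity check at $\hat\lambda=0$ (giving $\frac{1}{4}\Tr((P_x-I)^2)=d(\alpha-1)/(4\alpha)$) is correct and reproduces the leading term of (\ref{th2}), but that check does not involve any $z$-dependence and so does not test the step that is actually missing. Without the independent-noise reparametrization or a completed integration-by-parts calculation, the $\mathcal{O}(\hat\lambda)$ and $\mathcal{O}(\hat\lambda^2)$ coefficients in (\ref{th1})--(\ref{th2}) are not established.
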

\begin{proof}
    See Appendix \ref{appkl}.
\end{proof}
\begin{lemma}\label{lemma2}
 For $n>d$,  $\text{KL}_{\text{var}}$ is a monotonically decreasing function of $n/d$.
\end{lemma}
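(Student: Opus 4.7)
The plan is to leverage the explicit asymptotic formula from Theorem \ref{thm:bigtheorem} directly. Since $\alpha = d/n$, monotone decrease of $\langle \text{KL}_{\text{var}} \rangle$ in $n/d$ on the regime $n>d$ is equivalent to strict monotone \emph{increase} in $\alpha$ on $(0,1)$ (the chain rule gives $\partial_{n/d} = -\alpha^2 \partial_\alpha$). So I would differentiate the expansion (\ref{th1}) with respect to $\alpha$ and verify strict positivity coefficient-by-coefficient in the $\hat{\lambda}$-series.

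At order $\hat{\lambda}$, the $\alpha$-dependence is $\alpha/(1-\alpha)$, whose derivative is $1/(1-\alpha)^2 > 0$, and the prefactor $\tfrac{d\hat{\lambda} e^{-4T}(e^{2T}-1)}{2}$ is strictly positive for any $T>0$. At order $\hat{\lambda}^2$, I would rewrite the bracket from (\ref{th1}) by dividing through by $(1-\alpha)^3$:
\begin{equation*}
\frac{\alpha^2 + (1-\alpha)^3 e^{4T} + 4\alpha(1-\alpha)^2 e^{2T}}{(1-\alpha)^3}
= \frac{\alpha^2}{(1-\alpha)^3} + e^{4T} + \frac{4\alpha\, e^{2T}}{1-\alpha}.
\end{equation*}
The constant $e^{4T}$ drops out under $\partial_\alpha$, while $\partial_\alpha[\alpha^2/(1-\alpha)^3] = \alpha(2+\alpha)/(1-\alpha)^4 > 0$ and $\partial_\alpha[4\alpha e^{2T}/(1-\alpha)] = 4e^{2T}/(1-\alpha)^2 > 0$ on $(0,1)$. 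Combined with the positive $O(\hat{\lambda})$ piece, this shows $\partial_\alpha \langle\text{KL}_{\text{var}}\rangle>0$ throughout $(0,1)$ to the order retained in Theorem~\ref{thm:bigtheorem}, which gives the lemma.

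The main obstacle is that Theorem \ref{thm:bigtheorem} is a truncated small-$\hat{\lambda}$ expansion. If one wants the lemma to hold beyond the stated truncation, I see two routes. Either (a) observe that for $\hat{\lambda}\ll 1$ the $O(\hat{\lambda})$ term has a strictly positive derivative bounded away from zero on any compact sub-interval of $(0,1)$, so it dominates the higher-order corrections and strict monotonicity is preserved; or (b) go back to the resolvent calculation of Appendix \ref{appkl} and establish monotonicity directly from the deterministic equivalent of $\Tr((\hat{\Sigma}_G/\sigma^2 - I)^2)$, using that the Marchenko--Pastur Stieltjes transform and its derivatives have monotone dependence on $\alpha$. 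Route (a) is the cleaner match to the statement as written, and is the one I would pursue first.
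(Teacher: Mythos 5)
Your proposal is correct and follows essentially the same route as the paper: the paper's entire proof is the one-line observation that the derivative of the RHS of (\ref{th1}) with respect to $\alpha$ is positive, which you verify explicitly term by term (correctly noting the sign flip from the chain rule $\partial_{n/d}=-\alpha^2\partial_\alpha$). Your remark that the argument is only as strong as the truncated small-$\hat{\lambda}$ expansion is a fair caveat that the paper's one-line proof also implicitly inherits.
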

\begin{proof}
    Derivative of RHS in (\ref{th1}) with respect to $\alpha$ is positive. 
\end{proof}
\begin{lemma}\label{lemma3}
   In $n/d\to \infty$ limit, $\text{KL}_{\text{var}}/d$ scales as $\lambda^2e^{4T}(1-e^{-2T})^2$. Hence we conclude we can recover the underlying sampling distribution  in this limit only if $\lambda  e^{2T}(1-e^{-2T})\ll1$.  
\end{lemma}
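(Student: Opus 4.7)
The plan is to derive the lemma directly from the closed-form expansion (\ref{th1}), specialised to the regime $\alpha = d/n \to 0$. Since each term in that expansion is rational in $\alpha$ with singularities only at $\alpha = 1$, passing to the limit reduces to straightforward evaluation at $\alpha = 0$; the remaining work is tracking which powers of $\hat\lambda$ survive and converting back from $\hat\lambda$ to the physical noise scale $\lambda$.

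First, I would inspect the linear-in-$\hat\lambda$ contribution $d\alpha \hat\lambda e^{-4T}(e^{2T}-1)/[2(1-\alpha)]$. The explicit prefactor of $\alpha$ forces this term to vanish as $\alpha \to 0$, so the leading behaviour of $\langle \text{KL}_{\text{var}} \rangle$ necessarily comes from the quadratic piece. Within the quadratic bracket $\alpha^2 + (1-\alpha)^3 e^{4T} + 4\alpha(1-\alpha)^2 e^{2T}$, the first and third summands are $\mathcal{O}(\alpha)$ and drop out, while the middle one tends to $e^{4T}$. Combining with the $(1-\alpha)^{-3}$ prefactor, which tends to $1$, gives
\[
\langle \text{KL}_{\text{var}} \rangle \;\longrightarrow\; \frac{d \hat\lambda^2 e^{-8T}(e^{2T}-1)^2 \cdot e^{4T}}{4} \;=\; \frac{d \hat\lambda^2 e^{-4T}(e^{2T}-1)^2}{4}.
\]

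Next, I would substitute $\hat\lambda = \lambda e^{2T}/\sigma^2$ and use the identity $(e^{2T}-1)^2 = e^{4T}(1-e^{-2T})^2$ to rewrite this as
\[
\frac{\langle \text{KL}_{\text{var}} \rangle}{d} \;\longrightarrow\; \frac{\lambda^2 e^{4T}(1-e^{-2T})^2}{4\sigma^4},
\]
which exhibits precisely the scaling asserted in the lemma. Recovery of the underlying distribution in this double limit requires $\langle \text{KL}_{\text{var}} \rangle/d \to 0$; since $e^{4T}(1-e^{-2T})^2 = \bigl[e^{2T}(1-e^{-2T})\bigr]^2$, this is equivalent to $\lambda e^{2T}(1-e^{-2T}) \ll 1$, the condition claimed in the statement.

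The main (and essentially only) point requiring care is justifying that the $\mathcal{O}(\hat\lambda^3)$ remainder in (\ref{th1}) does not overwhelm the surviving $\hat\lambda^2$ contribution as both $\alpha$ and $\hat\lambda$ tend to zero. Since the singular behaviour in the expansion is confined to $\alpha \to 1^-$, the coefficients at every order are regular in a neighbourhood of $\alpha = 0$, so the remainder stays uniformly $\mathcal{O}(\hat\lambda^3)$ in that neighbourhood. Consequently the $\alpha \to 0$ and $\hat\lambda \to 0$ limits commute, and the stated scaling follows.
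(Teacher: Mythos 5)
Your proposal is correct and follows exactly the paper's proof strategy, namely taking the $\alpha = d/n \to 0$ limit of the explicit formula (\ref{th1}), converting $\hat\lambda$ back to $\lambda$, and reading off the recovery condition; you simply spell out the intermediate algebra and add a reasonable remark on uniformity of the $\mathcal{O}(\hat\lambda^3)$ remainder near $\alpha = 0$ that the paper leaves implicit.
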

\begin{proof}
    Consider $\alpha\to 0$ limit of RHS in (\ref{th1}).
\end{proof}

 Thse are in agreement with the plot in figure \ref{fig1}.

\section{Non-linear diffusion model}

\begin{figure*}[h]
    \centering
    \makebox[0.8\textwidth][c]{%
        \begin{tabular}{cc}
            \includegraphics[height=1.2in]{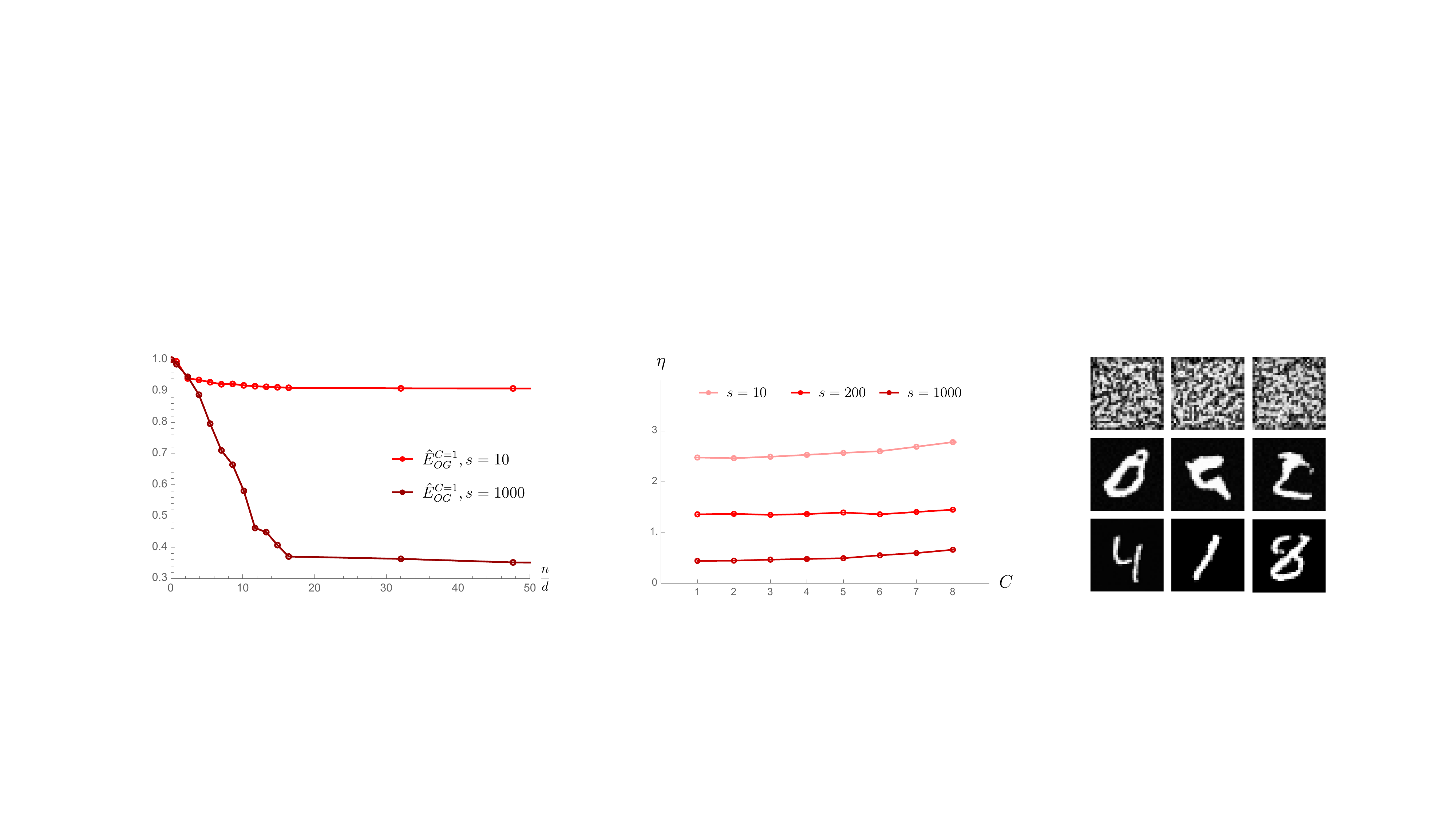}  \\
            \multicolumn{1}{c}{\footnotesize }  \\
        \end{tabular}
    }
    \caption{ 
The plot is based on
    PyTorch-based implementation of the algorithm in \cite{ho2020denoising}. The denoiser has the structure of U-Net \citep{ronneberger2015unetconvolutionalnetworksbiomedical} with additional residual connections consisting of positional encoding of the image and attention layers \citep{dosovitskiy2021an, tu2022maxvit, peebles2023scalable}.
    The train dataset is equal weight Gaussian mixture model (GMM) with $C$ components $i=0,1,2,..,C-1$  of dimension $d=64$. The $i$-th component is an isotropic Gaussian of mean $\mu_i=\mu_0+(i-(C-1)/2)\sigma_0, \mu_0=0.5$ and standard deviation $\sigma_i=0.1$.
  The number of samples in the original dataset is $N=10^4$. Plot on the left shows how increasing diffusion steps reduces  $\hat{E}_{OG}$ - it is the scaled value of the distance between generated and original distribution $E_{OG}$ by its value at $n=10$ for $C=1$. For the plot in the middle the
  training is done for $10$ epoch with batch size $128$. 
 The plot in the middle shows a linear dependence of error $\eta =-\log(1-E_{OG}(d/n\to 0)/E_{OG}(d/n \to \infty)) $ on sample complexity $C$. On the right, we have generated images from the model with same hyper-parameter configurations on MNIST dataset. We can clearly see that image quality improves as diffusion steps increase.  }\label{fig2}
\end{figure*}

From figure \ref{fig2}, it is clear that  production quality of a realistic diffusion model improves with higher diffusion steps $s$. This fact can be explained based on our theoretical analysis as follows. In the diffusion model we start from a clear image $x_0$ and then obtain noisy images for steps $t=1,2,...,s$ from (equation (2) of \cite{ho2020denoising})
\begin{equation}
    \begin{aligned}
        x_{t}=\sqrt{1-\beta_t}x_{t-1}+\sqrt{\lambda} \sqrt{\beta_t}Z, \quad  Z\sim \mathcal{N}(0,\mI_d)
    \end{aligned}
\end{equation}
Our observation is that it becomes identical to (\ref{add_noise}) if we choose the noise schedule as follows
\begin{equation}
    \beta_t=1-e^{-2\beta t/s}=(2t/s)\beta+\mathcal{O}(\beta^2), \beta \ll 1
\end{equation}
with the map $x_t \to X, x_{t-1} \to Y, \beta t/s \to T$.
 If instead of using a non-linear neural network we use the linear denoiser to predict a Gaussian approximation of $x_{t-1}$ from $x_t$ we can use lemma \ref{lemma3} and find that $\text{KL}_{\text{var}}(t)/d$ scales as $\lambda^2 \beta_t^2 = 4\lambda^2\beta^2 (t/s)^2 $. This shows that performance of the final denoising step is improved by a factor of $s^2$ compared to single step diffusion. In fact, performance of each step improves except the first one between $x_s, x_{s-1}$ which remain the same. This suggests as we increase $s$ overall production quality of the model will improve in agreement with the findings in figure \ref{fig2}.\footnote{For this purpose, the distance between original and generated distribution is calculated using
 \begin{equation}\label{errors}
    \begin{aligned}
     & \text{E}_{OG}= \sum_{i=1}^{d}\int dx \  (\rho_{O,i}(x)-\rho_{G,i}(x))^2, 
     &  \rho_{O/G,i}(x) \equiv \frac{1}{|S_{O/G}|}\sum_{k=1}^{|S_{O/G}|} \mathcal{N}(x|x^i_k, \epsilon_{O,i}^2), \quad \epsilon_{O,i}^2 = \frac{\hat{\Sigma}_{O,i}}{|S_{O/G}|^2}
    \end{aligned}
\end{equation}
Here $ \hat{\Sigma}_{O,i}$ is the empirical variance obtained from the original dataset $S_O$.
 }  This argument also predicts that we need to have $\lambda \beta \ll 1$ for good quality generated images.

The choice of different noise schedules $ \beta_t$ \cite{nichol2021improveddenoisingdiffusionprobabilistic,jabri2023scalableadaptivecomputationiterative, chen2023importancenoiseschedulingdiffusion} induces distinct linear models for each single step of the diffusion process. In the future, these models can be analyzed using techniques analogous to those employed in the present study, thereby enabling a systematic comparison of the performance associated with various noise schedules.

Moreover, the analytical framework developed here may be extended to encompass the analysis of (a stack of) wide neural networks, either in the kernel approximation regime \citep{jacot2020neuraltangentkernelconvergence, NEURIPS2018_5a4be1fa, NEURIPS2019_0d1a9651,bordelon2021spectrumdependentlearningcurves, Canatar_2021, Roberts_2022, atanasov2023the, demirtas2023neuralnetworkfieldtheories} or in mean field regime \citep{Mei_2018, pmlr-v139-yang21c, bordelon2022selfconsistentdynamicalfieldtheory, bordelon2023dynamicsfinitewidthkernel}  in place of the linear diffusion model considered in this work.

\appendix

\section{Foundations of diffusion-driven generative models}\label{appA}
In this Appendix we review the connection between stochastic interpolant and stochastic differential equation based generative models \cite{albergo2023stochastic, song2021scorebased, karras2022elucidatingdesignspacediffusionbased, huang2021variationalperspectivediffusionbasedgenerative, zhang2021diffusionnormalizingflow}. 
Given two probability density functions $\rho_0, \rho_1$, one can construct a  stochastic interpolant between $\rho_0$ and $\rho_1$ as follows
\begin{equation}
    x(t) = X(t,x_0,x_1) + \lambda_0(t) z,  \qquad t\in [0, 1]
\end{equation}
where the function $X, \lambda_0$ satisfies
\begin{equation}
    \begin{aligned}
        & X(0,x_0,x_1) = x_0, \quad X(1,x_0,x_1) = x_1,
        &||\partial_t X(t,x_0,x_1)||\le C||x_0-x_1||\\
        & \lambda_0(0)=0, \quad  \lambda_0(1) = 0,\quad  \lambda_0(t) \geq 0
    \end{aligned}
\end{equation}
for some positive constant $C$.
Here  $x_0,x_1,z$ are drawn independently from a probability measure $\rho_0$, $\rho_1$ and standard normal distribution $\mathcal{N}(0,\mI)$.
The probability distribution $\rho(t,x)$ of the process $x(t)$ satisfies the transport equation\footnote{Here we are using the notation $\nabla=\nabla_x$.}  
\begin{equation}
    \label{transport}
    \partial_t \rho + \nabla \cdot \left(b_\ODE\rho\right) = 0, \quad \rho(0,x)=\rho_0(x), \quad \rho(1,x)=\rho_1(x), 
\end{equation}
where we defined the velocity\footnote{The expectation is taken independently over $x_0\sim \rho_0, x_1\sim \rho_1$ and $z\sim \mathcal{N}(0,\mI).$ Here $\mathcal{N}(0,\mI)$ is normalized Gaussian distribution of appropriate dimension with vanishing mean and variance.}
\begin{equation}
    b_\ODE(t,x) = \EE [ \dot x(t)| x(t) = x] = \EE [ \partial_t X(t,x_0,x_1) + \dot \lambda_0(t) z| x(t) = x].
\end{equation}
One can estimate the velocity field by minimizing 
\begin{equation}
    \label{Lb}
   L_b[\hat{b}] =\int_0^1   \EE \left( \frac{1}{2} ||\hat b(t,x(t))||^2 - \left(\partial_t X(t,x_0,x_1) + \dot \lambda_0(t) z \right) \cdot \hat b(t,x(t)) \right) dt
\end{equation}

It's useful to introduce the score function $s(t,x)$ for the probability distribution for making the connection to the stochastic differential equation
   \begin{equation}
        s(t,x) = \nabla \log\rho(t,x) = - \lambda_0^{-1}(t) \EE( z |x(t)=x) 
    \end{equation}
It can be estimated by minimizing
\begin{equation}
    \label{Ls}
   L_s[\hat{s}] = \int_0^1 \EE\left( \frac{1}{2} ||\hat s(t,x(t))||^2 +\lambda_0^{-1}(t) z\cdot \hat s(t,x(t)) \right) dt
\end{equation}
The score function also can be obtained by minimizing the following alternative objective function known as the Fisher divergence
\begin{equation}
    \label{Fisher}
    \begin{aligned}
       L_F[\hat{s}] &= \tfrac12 \int_0^1 \EE\left( ||\hat s(t,x(t))-\nabla\log \rho(t,x)||^2 \right) dt \\
        &=\int_0^1 \EE\left( \frac{1}{2}||\hat s(t,x(t))||^2 +\nabla\cdot \hat s(t,x(t)) +\frac{1}{2} ||\nabla\log \rho(t,x))||^2\right) dt
    \end{aligned}
\end{equation}
To obtain the second line we have ignored the boundary term. Note that for the purpose of minimization the last term is a constant and hence it plays no role hence Fisher divergence can be minimized from a set of samples drawn from $\rho$ easily even if the explicit form of $\rho$ is not known \citep{JMLR:v6:hyvarinen05a}. However, the estimation of $\nabla\cdot \hat s(t,x(t))$ is computationally expensive and in practice one uses denoising score matching for estimating the score function \citep{Vincent}. 

It is easy to put eq. (\ref{transport}) into Fokker-Planck-Kolmogorov form
\begin{equation}
\begin{aligned}
    & \partial_t \rho + \nabla \cdot \left(b_{F}\rho\right) = +\lambda(t)  \Delta \rho, \qquad b_F(t,x) = b_\ODE(t,x) + \lambda(t) s(t,x)\\
     & \partial_t \rho + \nabla \cdot \left(b_{B}\rho\right) = -\lambda(t)  \Delta \rho, \qquad b_B(t,x) = b_\ODE(t,x) - \lambda(t) s(t,x)
\end{aligned}
\end{equation}
For an arbitrary function $\lambda(t)\geq0$. From this, we can read off the Itô SDE as follows\footnote{Here $ W_t$ represents a standard Wiener process, i.e., $ W_t - t W_1=N_t$ is a zero-mean Gaussian stochastic process that satisfies $\EE [N_t N^\T_t] = t(1-t)\mI$. }
\begin{equation}\label{ItoSDE}
\begin{aligned}
    & dX^F_t = b_F(t,X^F_t)dt  + \sqrt{2\lambda(t)} \,  dW_t\\
    & dX^B_t = b_B(t,X^B_t)dt  - \sqrt{2\lambda(t)} \,    dW_{1-t}
\end{aligned}
\end{equation}
The first equation is solved forward in time from the initial data~$X^F_{t=0}\sim\rho_0$ and the second one is solved backward in time from the final data~$X^B_{t=1}\sim\rho_1$.
One can recover the probability distribution $\rho$ from the SDE using Feynman–Kac formulae\footnote{A class of exactly solvable models are given by (Ornstein-Uhlenbeck dynamics discussed in the main text is a special case of this equation)
\begin{equation}
    dX^F_t=X^F_t\frac{d}{dt}(\log \eta(t)) dt+ \sqrt{\eta(t)^2 \frac{d}{dt} \bigg(\frac{\sigma(t)^2}{\eta(t)^2}\bigg)}dW_t, \quad X^F_t \sim \mathcal{N}(\eta(t)X^F_0,\sigma(t)^2)
\end{equation}
Where $\eta, \sigma$ are two positive functions satisfying $\eta(0)=1, \sigma(0)=0$.
}
\begin{equation}
\begin{aligned}
     \rho(t,x) &= \EE \left(e^{ \int_t^0\nabla \cdot  b_F(t, Y^B_t) dt}  \rho_0(Y_{t=0}^B)| Y_{t}^B=x\right)\\
     &= \EE \left(e^{ \int_t^1\nabla \cdot  b_B(t, Y^F_t) dt}  \rho_1(Y_{t=1}^F)| Y_{t}^F=x\right)
\end{aligned}
\end{equation}

In the domain of image generation, we don't know the exact functional form of the sampling distribution $\rho(x)$. However we have access to a finite number of samples from it and the goal  is to generate more data points from the unknown probability density $\rho(x)$. Traditional likelihood maximization techniques would assume a trial density function $\rho_\theta$ and try to adjust $\theta$ so that likelihood for obtaining known samples is maximized. In this process determination of the normalization of $\rho_\theta$ is computationally expensive as it requires multi-dimensional integration (typically it is required for each step of the optimization procedure for $\theta$). Diffusion based generative models are an alternative \citep{sohldickstein2015deepunsupervisedlearningusing, song2019generative, ho2020denoising}. In this section, we review basic notions of these stochastic differential equation based models. In particular, we examine an exactly solvable stochastic differential equation (SDE).
The Itô SDE under consideration is known as the Ornstein-Uhlenbeck Langevin dynamics and is expressed by:
\begin{equation}\label{OUdynamics}
    dX^F_t=-X^F_t dt+ \sqrt{2}dW_t, \quad X^F_t \sim \rho(t).
\end{equation}
The score function associated with the stochastic process will be denoted  as 
\begin{equation}\label{appS}
     s(t,x) = \nabla_x \log\rho(t,x) =\frac{1}{\rho(t,x)} \nabla_x \rho(t,x)
\end{equation}
The probability density $\rho$ satisfies the transport equation  (see (\ref{transport}))
\begin{equation}\label{appT}
    \begin{aligned}
        \partial_t \rho(t,x) &= \nabla \cdot \left((x+s(t,x)) \rho(t,x)\right) \\
        &= \nabla^2 \rho(t,x)+x.\nabla \rho(t,x)+d \rho(t,x).
    \end{aligned}
\end{equation}
The dimension of the data is defined to be given by $d=\text{dim}(x)$. The time evolution of the probability distribution is exactly solvable and given by
\begin{equation}\label{exactrho}
    \rho(t,X^F_t)=\int dX^F_0 \ \rho(0,X^F_0) \ \mathcal{N}(X^F_t|X^F_0e^{-t},1-e^{-2t}).
\end{equation}
Suppose we know the probability density $\rho(0,x)$ exactly. One way to sample from it would be to use the knowledge of the exact score function $s(t,x)$ in the reverse  diffusion process  (see (\ref{ItoSDE})), i.e,
\begin{equation}\label{revOU}
    \begin{aligned}
        & dX^B_t = (-X_t^B-2s(t,X_t^B))dt  - \sqrt{2} \,    dW_{1-t}
    \end{aligned}
\end{equation}
starting from a late time distribution $\rho(T,x)$ (it is assumed that we know how to sample from $\rho(T,x)$).

\section{Principle of deterministic equivalence}

In this appendix we review the theory of large random matrices leading to the principle of deterministic equivalence. A $d \times d$ Hermitian random matrix $A$ with measure $d\mu_A$ is called an invariant random matrix if the measure satisfies 
\begin{equation}
    d\mu_A(A)=d\mu_A(U^\dagger A U)
\end{equation}
for any unitary matrix $U$. In the limit of $d\to \infty$, the theory is conveniently described in terms of the single eigenvalue density $\rho_A$ (normalized to unity) that can be obtained from the resolvent or the Stieltjes transform
\begin{equation}\label{S-transform}
    G_A(z)=\langle \frac{1}{d}\Tr \bigg( \frac{1}{z-A}\bigg) \rangle= \int \frac{\rho_A(\lambda)d\lambda}{z-\lambda} \implies \rho_A(\lambda)=-\frac{1}{\pi}\lim_{\epsilon \to 0+}\Im(G_A(\lambda+i\epsilon))
\end{equation}
The moment generating function is given by
\begin{equation}
    M_A(z)=\frac{1}{z}G_A\bigg(\frac{1}{z}\bigg)-1=\langle\frac{1}{d}\Tr \sum_{i=1}^\infty A^i z^i\rangle
\end{equation}
R, S transformation of the eigenvalue density are defined by
\begin{equation}
\begin{aligned}
    & R_A(z)=G_A^{inv}(z)-\frac{1}{z}, \quad G_A^{inv}(G_A(z))=z\\
    &  S_A(z)=\frac{z+1}{z}M_A^{inv}(z), \quad M_A^{inv}(M_A(z))=z
\end{aligned} 
\end{equation}
R, S transformations are useful when we study the matrix model to the leading order in large $d$ limit as we explain next. Two invariant random matrices $A,B$ are called free to the leading order in large $d$ limit if they are independent.  Free sum and free product of $A,B$ are defined as follows
\begin{equation}
    \begin{aligned}
       & A  \boxplus B = U^\dagger A U+V^\dagger B V\\
       & A \star B=\sqrt{A} B \sqrt{A}
    \end{aligned}
\end{equation}
Here $U,V$ are are sampled independently from uniform measure on the unitary group, i.e., Haar random unitary.
It can be shown that for two invariant, independent random matrices $A,B$ the moment generating function of $A \boxplus B$ and $A+B$ coincides, similarly moment generating function of $A\star B$ and $AB$ coincides (to the leading order in large $d$, i.e., when they are free). Furthermore following identity holds to the leading order in large $d$ limit for two free matrices $A,B$
\begin{equation}
    R_{A\boxplus B}(z)=R_{A}(z)+R_{B}(z), \quad S_{A\star B}(z)=S_{A}(z) S_{B}(z)
\end{equation}

Now we turn to application of these ideas. Consider $d\times d$ matrix $\Sh=x^T x/n$ where each row of $x$ (there are $n$ rows) is drawn from $\mathcal{N}(0,\Sigma)$. Then it can be written as a free product of $\Sigma$ and a white Wishart matrix $W$ (corresponds to $x^T x/n$ where each row of $x$ is drawn from $\mathcal{N}(0,\mI_d)$): $\Sh=\Sigma\star W$. From the definition of S transformation it follows that
\begin{equation}
    \begin{aligned}
        M_{\Sh}(z)=\frac{1}{\frac{S_{\Sh}(M_{\Sh}(z))}{z}-1}=\frac{1}{\frac{S_{\Sigma}(M_{\Sh}(z)) S_{W}(M_{\Sh}(z))}{z}-1}=M_{\Sigma}\bigg(\frac{z}{S_{W}(M_{\Sh}(z))}\bigg)
    \end{aligned}
\end{equation}
To obtain the final equality we used self-consistency of the equation itself. To recast this equation in a compact way we define $\df^1_A(z)=-M_A(-1/z)= \langle \Tr  \Sh(\Sh+\Rh)^{-1} \rangle/d$. In terms of this new quantity we have
\begin{equation}
    \begin{aligned}
        \df^1_{\Sh}(\Rh)  =   \df^1_{\Sigma}(R) , \quad \Rh=R(1-\alpha \ \df^1_{\Sigma} (R))
    \end{aligned}
\end{equation}
To obtain this equation we used knowledge of S transformation of white  Wishart matrices $S_W(z)=1/(1+\alpha z), \alpha=d/n$. This equation is valid only leading order in large $d,n$ limit with fixed $\alpha$. It is known as the principle of deterministic equivalence.  See \cite{atanasov2024scalingrenormalizationhighdimensionalregression} and references therein for a recent discussion of it.

\section{Generalization error from deterministic equivalence}\label{appkl}

In this appendix, we provide proof of the main theorem in the paper (\ref{th1}), (\ref{th2}). Consider the scenario when the underlying sampling distribution is an isotropic Gaussian $\rho=\mathcal{N}(\mu,\sigma^2I_d)$. The linear diffusion model $Y=\theta_0+\theta_1 X$  is trained to solve the following linear regression problem 
\begin{equation}\label{model_assumption}
\begin{aligned}
   & Y_k=e^T X_k+ Z_k, \quad X_k \sim \mathcal{N}(\mu_X, \Sigma=\sigma_X^2I_d), \quad Z_k \sim \mathcal{N}(0,\Delta_TI_d), \quad k=1,\dots, n\\
   &    \mu_X=e^{-T}\mu, \quad \sigma_X^2=e^{-2T}\sigma^2 +\Delta_T,\quad \Delta_T=\lambda(1-e^{-2T})\\
\end{aligned}
\end{equation}
Here $X_k,Z_k$ are taken independent of each other.\footnote{Comparing first and second moment of $Y_k=a(T)X_k+c(T)+Z_k$, for $X_k$ given as in (\ref{add_noise}), with the expected answer, we can see that in the domain considered here, i.e, $\lambda=\hat{\lambda} \sigma^2e^{-2T}, \hat{\lambda}\ll1  $, the solution to the leading order in $\hat{\lambda}$ is indeed given by (\ref{model_assumption}).}
The optimal value of the weights $\hat{\theta}_0,\hat{\theta}_1 $ that minimizes the standard square loss are given by 
\begin{equation}
    \begin{aligned}
         & \hat{\theta}_1^T=(x^Tx+n \hat{R})^{-1}x^Ty, \quad \hat{\theta}_0=\hat{Y}-\hat{\theta}_1 \hat{X} \quad  \hat{X}=\frac{1}{n}\sum_{k=1}^n X_k, \quad \hat{Y}=\frac{1}{n}\sum_{k=1}^n Y_k\\
    \end{aligned}
\end{equation}
Here $x,y$ are $n\times d$ dimensional matrices  whose $k$-th row is $(X_k-\hat{X})^T,(Y_k-\hat{Y})^T$ respectively (e.g. $x_{iA}=(X_i-\hat{X})_A$ etc.) and  $\hat{R}$ is a scalar ridge parameter. Once trained the diffusion model generates data from $\rho_G=\mathcal{N}(\hat{\mu}_G,\hat{\Sigma}_G) $, $\hat{\mu}_G=\hat{\theta}_0+\hat{\theta}_1\mu_X, \hat{\Sigma}_G=\sigma_X^2\hat{\theta}_1^T\hat{\theta}_1$.

KL divergence between two PDF $\rho_1=\mathcal{N}(\mu_1,\Sigma_1), \rho_2=\mathcal{N}(\mu_2,\Sigma_2)$ is given by
\begin{equation}
    \begin{aligned}
     \text{KL}(\rho_1|\rho_2)=& \int \rho_1(x)\log \frac{\rho_1(x)}{\rho_2(x)} dx\\
     =&\frac{\Tr(\Sigma_2^{-1}\Sigma_1)-\Tr(I)}{2}- \frac{1}{2}\log |\Sigma_2^{-1}\Sigma_1|+\frac{1}{2} (\mu_1-\mu_2)^T\Sigma_2^{-1}(\mu_1-\mu_2)
    \end{aligned}
\end{equation}
We choose $\mu_2=\mu, \Sigma_2=\sigma^2 I_d$ to correspond to the underlying distribution and $ \mu_1=\hat{\mu}_G,\Sigma_1=\hat{\Sigma}_G$
corresponds to the generated distribution. This simplifies the formula above to
\begin{equation}\label{KL1}
    \begin{aligned}
     \text{KL}(\rho_G|\rho)
     =&\frac{1}{2}(\Tr\bigg(\frac{\hat{\Sigma}_G}{\sigma^2}\bigg)-\Tr(I))- \frac{1}{2}\Tr \log \bigg(\frac{\hat{\Sigma}_G}{\sigma^{2}}\bigg)+\frac{1}{2\sigma^2} (\mu-\hat{\mu}_G)^T(\mu-\hat{\mu}_G)\\
     \geq&\frac{1}{2}(\Tr\bigg(\frac{\hat{\Sigma}_G}{\sigma^2}\bigg)-\Tr(I))- \frac{1}{2}\Tr \log \bigg(\frac{\hat{\Sigma}_G}{\sigma^{2}}\bigg)
    \end{aligned}
\end{equation}
To go to the second line we have ignored the positive semi-definite term related to difference in mean between generated and underlying distribution. We proceed to calculate the variance term in KL divergence above. It follows that
\begin{equation}
\begin{aligned}
    \hat{\theta}_1^T&=(x^Tx+n \hat{R})^{-1}x^Ty\\
    &=(x^Tx+n \hat{R})^{-1}x^T(x \bar{\theta}^T_1+ z)\\
    &= e^T(1-\Rh(\Sh+\Rh)^{-1})+(\Sh+\Rh)^{-1}\frac{x^Tz}{n}
\end{aligned}
\end{equation}
We have defined $\bar{\theta}_1=e^T I_d, \Sh=x^T x/n$ for later convenience. Next we calculate
\begin{equation}
\begin{aligned}
     \hat{\theta}^T_1\hat{\theta}_1=& \bar{\theta}_1^T \bar{\theta}_1+\hat{\Sigma}_{\theta_1}\\
     \hat{\Sigma}_{\theta_1}=&(\Sh+\Rh)^{-1}\frac{x^T z}{n}\frac{z^T x}{n} (\Sh+\Rh)^{-1}+e^{2T}\Rh^2(\Sh+\Rh)^{-2}-2e^{2T}\Rh(\Sh+\Rh)^{-1}\\
     &+e^T \left(\frac{z^Tx}{n}(\Sh+\Rh)^{-1}+(\Sh+\Rh)^{-1} \frac{x^Tz}{n}\right)-e^T\Rh \  (\Sh+\Rh)^{-1}\left(\frac{z^Tx}{n}+\frac{x^Tz}{n}\right)(\Sh+\Rh)^{-1}
\end{aligned}
\end{equation}
Plugging this back in the expression of $\hat{\Sigma}_G$ we get
\begin{equation}
\begin{aligned}\label{stdGen}
   \frac{\hat{\Sigma}_G}{\sigma^2}=& (1 +\sigma^{-2}e^{2T}\Delta_T)(I+e^{-2T}\hat{\Sigma}_{\theta_1})= I+\hat{\sigma}_G\\
   \hat{\sigma}_G=&(e^{-2T} +\sigma^{-2}\Delta_T)\hat{\Sigma}_{\theta_1}+e^{2T}\sigma^{-2}\Delta_T I
\end{aligned}
\end{equation}
Appearance of logarithm in the KL divergence makes it difficult to calculate its statistical expectation value. In next sub-section we develop a controlled expansion for this purpose.

\subsection*{Analytic tractability and various approximations}

From (\ref{stdGen}) we see that the generated distribution remains close to the original underlying distribution if both $\Delta_T$ and $\hat{\sigma}_G$ remain small. To this end, we focus on the following limit:

$\lambda=\hat{\lambda} \sigma^2e^{-2T},\hat{R}=\lambda\hat{r} $. In this regime $\hat{\sigma}_G\sim \hat{\lambda}$. Further taking $\hat{\lambda}\ll 1$ makes $\hat{\sigma}_G$ small and  we can approximate 
\begin{equation}
    \begin{aligned}
         \log \bigg(\frac{\hat{\Sigma}_G}{\sigma^{2}}\bigg)=\log (I+\hat{\sigma}_G)=\hat{\sigma}_G-\frac{1}{2}\hat{\sigma}_G^2+\mathcal{O}(\hat{\sigma}_G^3)
    \end{aligned}
\end{equation}
Plugging this back into the expression of KL divergence (\ref{KL1}) we get $ \text{KL}(\rho||\rho_G)=\text{KL}_{\text{mean}}+\text{KL}_{\text{var}}$, where
\begin{equation}\label{KL2}
    \begin{aligned}
    &\text{KL}_{\text{mean}}(\rho_G|\rho)
     =\frac{1}{2\sigma^2} (\mu-\hat{\mu}_G)^T(\mu-\hat{\mu}_G), \quad \text{KL}_{\text{var}}(\rho_G|\rho)=\frac{1}{4}\Tr\bigg(\bigg(\frac{\hat{\Sigma}_G}{\sigma^2}-I\bigg)^2\bigg)
    \end{aligned}
\end{equation}
We focus on the variance term. Plugging back expressions from previous analysis
\begin{equation}\label{KL2}
    \begin{aligned}
     \text{KL}(\rho_G|\rho)_{\text{var}}
     =&\frac{1}{4}\Tr\bigg(\frac{\hat{\Sigma}_G}{\sigma^2}-I\bigg)^2=\frac{1}{4}\Tr (\hat{\sigma}_G^2)\\
     =&\frac{1}{4}\Tr(((e^{-2T} +\sigma^{-2}\Delta_T)\hat{\Sigma}_{\theta_1}+\sigma^{-2}e^{2T}\Delta_T)^2)\\
     =& \frac{1}{4}(e^{-2T} +\sigma^{-2}\Delta_T)^2\Tr \hat{\Sigma}^2_{\theta_1}\\
     &+\frac{1}{2}(e^{-2T} +\sigma^{-2}\Delta_T)\sigma^{-2}e^{2T}\Delta_T\Tr \hat{\Sigma}_{\theta_1}+\frac{d}{4}(\sigma^{-2}e^{2T}\Delta_T)^2
    \end{aligned}
\end{equation}
We are interested in statistical average of the expression above. We consider the following higher dimensional statistics limit: $n \to \infty, d\to \infty$ keeping $\alpha=d/n$ fixed. In this limit, we can take advantage of principle of deterministic equivalence discussed in previous appendix:
\begin{equation}\label{MDE}
\begin{aligned}
\df^1_{\Sh}(\Rh)  =   \df^1_{\Sigma}(R)  , \quad \df^n_{\Sh}(\Rh)=\frac{1}{d} \langle \Tr  \Sh^n(\Sh+\Rh)^{-n} \rangle , \quad \Rh=R(1-\alpha \ \df^1_{\Sigma} (R))
\end{aligned}
\end{equation}
Since $x,z$ are statistically independent and $z$ has zero mean, we get
\begin{equation}
\begin{aligned}
\Tr  \langle   \hat{\Sigma}_{\theta_1} \rangle =&\Tr\langle (\Sh+\Rh)^{-1}\frac{x^T z}{n}\frac{z^T x}{n} (\Sh+\Rh)^{-1}+e^{2T}\Rh^2(\Sh+\Rh)^{-2}-2e^{2T}\Rh(\Sh+\Rh)^{-1} \rangle\\
\end{aligned}
\end{equation}
The first term is simplified after performing statistical average over $z$
\begin{equation}
    \begin{aligned}
        (x^T z z^T x)_{AB} =x^T_{Ai} z_{iC}z_{jC}x_{jB}\to nd \Delta_T \Sh_{AB}
    \end{aligned}
\end{equation}
The factor of $d$ came from sum over $C$ (we are using the convention of repeated index implies sum). The first term becomes $ \alpha \Delta_T$ times
\begin{equation}
    \begin{aligned}
        \Tr\langle (\Sh+\Rh)^{-1}\Sh(\Sh+\Rh)^{-1} \rangle= \Tr\langle\Sh (\Sh+\Rh)^{-2} \rangle = -\partial_{\Rh}\langle \Tr\Sh (\Sh+\Rh)^{-1} \rangle =  -d\partial_{\Rh} \langle \df^1_{\Sigma}(R)  \rangle 
    \end{aligned}
\end{equation}
For the case we are considering,
\begin{equation}\label{formula1}
    \df^1_{\Sigma=\sigma_X^2 I_d}(R)=\df^1_{\Sigma=I_d}(\sigma_X^{-2}R)=\frac{1}{1+\sigma_X^{-2}R}
\end{equation}
Putting these expressions  together the first term becomes 
\begin{equation}
    \begin{aligned}
        \alpha d \Delta_T \frac{\sigma_X^{-2}}{(1+\sigma_X^{-2}R)^2} \partial_{\Rh} R=\alpha d \Delta_T \sigma_X^{-2} \frac{(\df^1_\Sigma(R))^2}{1-\alpha \df^2_{\Sigma}(R)}
    \end{aligned}
\end{equation}
To obtain the second line we have used the following identity
\begin{equation}\label{identity1}
    \df^{n+1}_\Sigma (R)=\left(1+\frac{R}{n}\partial_R \right)\df^{n}_\Sigma (R), \quad  \partial_{\Rh} R=\frac{1}{1-\alpha \df^2_{\Sigma}(R)}
\end{equation}
The third term is $-2e^{2T}$ times 
\begin{equation}
    \begin{aligned}
        \Tr \langle \Rh(\Sh+\Rh)^{-1} \rangle=d(1- \ \df^1_\Sigma(R))
    \end{aligned}
\end{equation}
The second term is $e^{2T}$ times 
\begin{equation}
    \begin{aligned}
        \Tr \langle (\Rh(\Sh+\Rh)^{-1})^2 \rangle&=\Tr \langle 1-2 \Sh (\Sh+\Rh)^{-1}+\Sh^2 (\Sh+\Rh)^{-2}\rangle\\
        &=d-2d \ \df^1_{\Sigma(R)}+d \ \df^2_{\Sh}(\Rh)\\
        &=d-2d \ \df^1_\Sigma(R)+d(1+ \frac{\Rh}{1-\alpha \df^2_\Sigma(R) }\partial_R \df^1_\Sigma (R))  
    \end{aligned}
\end{equation}
Combining all these we get
\begin{equation}\label{var1st}
    \begin{aligned}
        \Tr  \langle   \hat{\Sigma}_{\theta_1} \rangle=&\alpha d \Delta_T \sigma_X^{-2} \frac{(\df^1_\Sigma(R))^2}{1-\alpha \df^2_{\Sigma}(R)}-2e^{2T}d(1- \ \df^1_\Sigma(R))\\
        &+e^{2T}d \left(2-2 \ \df^1_\Sigma(R)+\frac{\Rh}{1-\alpha \df^2_\Sigma(R) }\partial_R \df^1_\Sigma (R)\right)
    \end{aligned}
\end{equation}
Now we turn to evaluate $  \Tr  \langle   \hat{\Sigma}_{\theta_1}^2 \rangle$. We want to keep track of terms that are order $d$ and ignore sub-leading terms. This restricts possible contractions of $z, z^T$. We get a factor of $d$ only from contractions that happen next to each other. Keeping only those terms   
\begin{equation}
\begin{aligned}
    \Tr \langle \hat{\Sigma}_{\theta_1}^2\rangle \approx &\langle  \alpha^2 \Delta_T^2\Tr (\Sh^2 (\Sh+\Rh)^{-4})+e^{4T}\Rh^4\Tr (\Sh+\Rh)^{-4}+4e^{4T}\Rh^2\Tr (\Sh+\Rh)^{-2}\\
    &+2\alpha \Delta_Te^{2T}\Rh^2 \Tr (\Sh (\Sh+\Rh)^{-4})-4\alpha \Delta_Te^{2T}\Rh \Tr \Sh (\Sh+\Rh)^{-3}-4e^{4T}\Rh^3 \Tr  (\Sh+\Rh)^{-3}\\
     &+2\alpha \Delta_Te^{2T} \Tr (\Sh (\Sh+\Rh)^{-2})\rangle
\end{aligned}
\end{equation}
All these expectation values can be calculated from a generic term of the form for integer $a>0,b\geq0$
\begin{equation}\label{identity2}
    \begin{aligned}
      C_{a,b}= \langle \Tr (\Sh^a (\Sh+\Rh)^{-(a+b)}) \rangle&=\frac{(-1)^{b}}{a(a+1)\dots (a+b-1)}\partial_{\Rh}^b \langle  \Tr (\Sh^a (\Sh+\Rh)^{-a})\rangle \\
        &=d\frac{\Gamma(a)}{\Gamma(a+b)}(-\partial_{\Rh})^b \df^a_{\Sh}(\Rh)
    \end{aligned}
\end{equation}
Another identity that is useful is the following 
\begin{equation}\label{identity3}
    \begin{aligned}
    B_a&= \langle  \Tr  \Rh^a (\Sh+\Rh)^{-a} \rangle \\
    &=\langle \Tr( \Rh^{a-1} (\Sh+\Rh)^{-(a-1)}-\Rh^{a-1}\Sh (\Sh+\Rh)^{-a})  \rangle\\
    &=\langle \Tr(1-\sum_{i=1}^a\Rh^{a-i}\Sh (\Sh+\Rh)^{-(a-i+1)})\rangle\\
    &=d-\sum_{i=1}^a \Rh^{a-i}C_{1,a-i}
    \end{aligned}
\end{equation}
Now we turn to calculate expression for the symbols defined above. To get an explicit formula for $C_{a,b}$ first we replace the derivative with respect to $\hat{R}$ by a derivative with respect to $R$ with the chain rule given in the second equation on (\ref{identity1}). Next we use the recursion relation in the first equation on (\ref{identity1}) to express everything in terms of $\df^1_{\Sh}(\Rh)  \simeq   \df^1_{\Sigma}(R)$. Finally to perform the derivatives we use the self-consistency equation of the ridge parameter given in the last equation on (\ref{MDE}). Finally we use (\ref{formula1}). Once $C_{a,b}$s are computed we use the recursion relation to compute $B_a$s. 
The expression for these quantities for $\alpha>1$ are complicated. They are given as follows

\begin{equation}
    \begin{aligned}
        & C_{1,1}=\frac{2 d \sigma _X^2 \left(R+\sigma _X^2\right){}^2}{\left| -\left((\alpha -1) \sigma _X^4\right)+2 R \sigma _X^2+R^2\right|  \left(\left| -\left((\alpha -1) \sigma _X^4\right)+2 R \sigma _X^2+R^2\right| +R^2+2 R \sigma _X^2+(\alpha +1) \sigma _X^4\right)}\\
        & C_{1,2}=\frac{d \sigma _X^2 \left(R+\sigma _X^2\right){}^3}{\left| -\alpha  \sigma _X^4+\sigma _X^4+2 R \sigma _X^2+R^2\right| {}^3}\\
        & C_{1,3}=\frac{d \sigma _X^2 \left(R+\sigma _X^2\right){}^4 \left(R^2+2 R \sigma _X^2+(\alpha +1) \sigma _X^4\right) \text{sgn}\left(-\left((\alpha -1) \sigma _X^4\right)+2 R \sigma _X^2+R^2\right)}{\left(R^2+2 R \sigma _X^2-\left((\alpha -1) \sigma _X^4\right)\right){}^5}\\
        & C_{2,1}=\begin{array}{cc}
 \Biggl\{ & 
\begin{array}{cc}
 \frac{d \left(R+\sigma _X^2\right){}^3 \left(3 R^2 \sigma _X^2+R^3-3 (\alpha -1) R \sigma _X^4+(\alpha -1)^2 \sigma _X^6\right)}{\alpha  \sigma _X^2 \left(-R^2-2 R \sigma _X^2+(\alpha -1) \sigma _X^4\right){}^3} \hspace{2cm}\text{When } R \left(R+2 \sigma _X^2\right)<(\alpha -1) \sigma _X^4 \\
 \frac{d \sigma _X^4 \left((\alpha +1) R^3+3 R^2 \sigma _X^2-3 (\alpha -1) R \sigma _X^4+(\alpha -1)^2 \sigma _X^6\right)}{\left(R^2+2 R \sigma _X^2-\left((\alpha -1) \sigma _X^4\right)\right){}^3} \hspace{2cm} \text{Otherwise} \\
\end{array}
 \\ 
\end{array}\\
& C_{2,2}=\frac{d \sigma _X^4 \left(R+\sigma _X^2\right){}^4 \left((\alpha +1) R^2-2 (\alpha -1) R \sigma _X^2+(\alpha -1)^2 \sigma _X^4\right) \text{sgn}\left(-\left((\alpha -1) \sigma _X^4\right)+2 R \sigma _X^2+R^2\right)}{\left(R^2+2 R \sigma _X^2-\left((\alpha -1) \sigma _X^4\right)\right){}^5}\\
& C_{2,3}=\frac{1}{\left(R^2+2 R \sigma _X^2-\left((\alpha -1) \sigma _X^4\right)\right){}^7}(d \sigma _X^4 \left(R+\sigma _X^2\right){}^5 ((\alpha +1) R^4+3 ((\alpha -2) \alpha +2) R^2 \sigma _X^4-(\alpha -4) R^3 \sigma _X^2\\
& \hspace{5cm} +(\alpha -4) (\alpha -1) R \sigma _X^6+(\alpha -1)^2 (\alpha +1) \sigma _X^8) \text{sgn}\left(-\left((\alpha -1) \sigma _X^4\right)+2 R \sigma _X^2+R^2\right))\\
& C_{3,1}=\begin{array}{cc}
\Biggl \{ & 
\begin{array}{cc}
 \frac{1}{\alpha  \sigma _X^2 \left(-R^2-2 R \sigma _X^2+(\alpha -1) \sigma _X^4\right){}^5}(d \left(R+\sigma _X^2\right){}^4 ((\alpha -1)^2 (\alpha +15) R^2 \sigma _X^8-20 (\alpha -1) R^3 \sigma _X^6\\
 -5 (\alpha -3) R^4 \sigma _X^4
 +6 R^5 \sigma _X^2+R^6-6 (\alpha -1)^3 R \sigma _X^{10}\\
 +(\alpha -1)^4 \sigma _X^{12})) \hspace{1cm}\text{When } R \left(R+2 \sigma _X^2\right)<(\alpha -1) \sigma _X^4 \\
 \frac{1}{\left(R^2+2 R \sigma _X^2-\left((\alpha -1) \sigma _X^4\right)\right){}^5}(d \sigma _X^6 ((\alpha  (\alpha +3)+1) R^6+(\alpha -1)^2 (\alpha +15) R^2 \sigma _X^8\\+4 (\alpha -1) ((\alpha -1) \alpha -5) R^3 \sigma _X^6\\
 +(\alpha  ((\alpha -12) \alpha +6)+15) R^4 \sigma _X^4-2 ((\alpha -5) \alpha -3) R^5 \sigma _X^2-6 (\alpha -1)^3 R \sigma _X^{10}+(\alpha -1)^4 \sigma _X^{12})) \hspace{2cm}\text{Otherwise} \\
\end{array}
 \\
\end{array}\\
& C_{3,2}=\frac{1}{\left(R^2+2 R \sigma _X^2-\left((\alpha -1) \sigma _X^4\right)\right){}^7}(d \sigma _X^6 \left(R+\sigma _X^2\right){}^5 ((\alpha  (\alpha +3)+1) R^4+3 \left(\alpha ^3-3 \alpha +2\right) R^2 \sigma _X^4\\&+2 \left(-3 \alpha ^2+\alpha +2\right) R^3 \sigma _X^2
-4 (\alpha -1)^3 R \sigma _X^6+(\alpha -1)^4 \sigma _X^8) \text{sgn}\left(-\alpha  \sigma _X^4+\sigma _X^4+2 R \sigma _X^2+R^2\right))\\
& C_{3,3}=\frac{1}{\left(R^2+2 R \sigma _X^2-\left((\alpha -1) \sigma _X^4\right)\right){}^9}(d \sigma _X^6 \left(R+\sigma _X^2\right){}^6 ((\alpha  (\alpha +3)+1) R^6+6 \left(-\alpha ^2+\alpha +1\right) R^5 \sigma _X^2
\\
& \hspace{1cm}+3 (\alpha -1)^2 (\alpha  (2 \alpha -3)+5) R^2 \sigma _X^8-4 (\alpha -1) (2 (\alpha -2) \alpha +5) R^3 \sigma _X^6
+3 (\alpha  (2 (\alpha -1) \alpha -3)+5) R^4 \sigma _X^4\\
& \hspace{4cm} -6 (\alpha -1)^3 R \sigma _X^{10}+(\alpha -1)^4 (\alpha +1) \sigma _X^{12}) \text{sgn}\left(-\alpha  \sigma _X^4+\sigma _X^4+2 R \sigma _X^2+R^2\right))
    \end{aligned}
\end{equation}
\begin{equation}
    \begin{aligned}
       & B_1=\begin{array}{cc}
 \Biggl\{ & 
\begin{array}{cc}
 \frac{d R}{R+\sigma _X^2} &\text{When } R \left(R+2 \sigma _X^2\right)\geq (\alpha -1) \sigma _X^4 \\
 d-\frac{d \left(\frac{R}{\sigma _X^2}+1\right)}{\alpha } & \text{Otherwise} \\
\end{array}
 \\
\end{array}\\
& B_2=\begin{array}{cc}
\Biggl \{ & 
\begin{array}{cc}
 \frac{d R^2}{R^2+2 R \sigma _X^2-\alpha  \sigma _X^4+\sigma _X^4} & \text{When } R \left(R+2 \sigma _X^2\right)\geq (\alpha -1) \sigma _X^4 \\
 d \left(-\frac{1}{\alpha }-\frac{R^2}{R^2+2 R \sigma _X^2-\alpha  \sigma _X^4+\sigma _X^4}+1\right) & \text{Otherwise} \\
\end{array}
 \\
\end{array}\\
& B_3=\begin{array}{cc}
 \Biggl\{ & 
\begin{array}{cc}
 \frac{d R^3 \left(3 R^2 \sigma _X^2+R^3+3 R \sigma _X^4-\left(\left(\alpha ^2-1\right) \sigma _X^6\right)\right)}{\left(R^2+2 R \sigma _X^2-\left((\alpha -1) \sigma _X^4\right)\right){}^3} & \text{When } R\left(R+2 \sigma _X^2\right)\geq (\alpha -1) \sigma _X^4 \\
 \frac{d \left(R-(\alpha -1) \sigma _X^2\right){}^3 \left(3 R^2 \sigma _X^2+R^3+3 R \sigma _X^4-\left((\alpha -1) \sigma _X^6\right)\right)}{\alpha  \left(-R^2-2 R \sigma _X^2+(\alpha -1) \sigma _X^4\right){}^3} & \text{Otherwise} \\
\end{array}
 \\
\end{array}\\
& B_4=\begin{array}{cc}
\Biggl \{ & 
\begin{array}{cc}
 \frac{1}{\left(R^2+2 R \sigma _X^2-\left((\alpha -1) \sigma _X^4\right)\right){}^5}(d R^4 (4 \left(-\alpha ^2+\alpha +5\right) R^3 \sigma _X^6
 +(\alpha  ((\alpha -12) \alpha +6)+15) R^2 \sigma _X^8\\+(\alpha +15) R^4 \sigma _X^4+6 R^5 \sigma _X^2
 +R^6+2 (\alpha  ((\alpha -6) \alpha +2)+3) R \sigma _X^{10}\\+(\alpha -1)^2 (\alpha  (\alpha +3)+1) \sigma _X^{12})) \hspace{2cm} \text{When } R \left(R+2 \sigma _X^2\right)\geq (\alpha -1) \sigma _X^4 \\
 \frac{d \left(R-(\alpha -1) \sigma _X^2\right){}^4 \left(-5 (\alpha -3) R^2 \sigma _X^8+(\alpha +15) R^4 \sigma _X^4+20 R^3 \sigma _X^6+6 R^5 \sigma _X^2+R^6-6 (\alpha -1) R \sigma _X^{10}+(\alpha -1)^2 \sigma _X^{12}\right)}{\alpha  \left(-R^2-2 R \sigma _X^2+(\alpha -1) \sigma _X^4\right){}^5} \hspace{1cm} \text{Otherwise} \\
\end{array}
 \\
\end{array}
    \end{aligned}
\end{equation}
Explicit expression of some of these symbols that will be required later is given below for $\alpha<1$.
\begin{equation}
    \begin{aligned}
       & B_1=\frac{d R}{R+\sigma _X^2} \\
       & B_2=\frac{d R^2}{R^2+2 R \sigma _X^2-\alpha  \sigma _X^4+\sigma _X^4}\\
     & B_3= \frac{d R^3 \left(3 R^2 \sigma _X^2+R^3+3 R \sigma _X^4-\left(\left(\alpha ^2-1\right) \sigma _X^6\right)\right)}{\left(R^2+2 R \sigma _X^2-\left((\alpha -1) \sigma _X^4\right)\right){}^3}\\
       & B_4=\frac{1}{\left(R^2+2 R \sigma _X^2-\left((\alpha -1) \sigma _X^4\right)\right){}^5} (d R^4 (4 \left(-\alpha ^2+\alpha +5\right) R^3 \sigma _X^6+(\alpha  ((\alpha -12) \alpha +6)+15) R^2 \sigma _X^8\\
       &\hspace{1cm}+(\alpha +15) R^4 \sigma _X^4+6 R^5 \sigma _X^2+R^6+2 (\alpha  ((\alpha -6) \alpha +2)+3) R \sigma _X^{10}+(\alpha -1)^2 (\alpha  (\alpha +3)+1) \sigma _X^{12}))
    \end{aligned}
\end{equation}

\begin{equation}
    \begin{aligned}
        & C_{1,1}=\frac{d \sigma _X^2}{R^2+2 R \sigma _X^2-\alpha  \sigma _X^4+\sigma _X^4}\\
        &  C_{1,2}=\frac{d \sigma _X^2 \left(R+\sigma _X^2\right){}^3}{\left(R^2+2 R \sigma _X^2-\left((\alpha -1) \sigma _X^4\right)\right){}^3}\\
        & C_{1,3}=\frac{d \sigma _X^2 \left(R+\sigma _X^2\right){}^4 \left(R^2+2 R \sigma _X^2+(\alpha +1) \sigma _X^4\right)}{\left(R^2+2 R \sigma _X^2-\left((\alpha -1) \sigma _X^4\right)\right){}^5}\\
        & C_{2,1}=\frac{d \sigma _X^4 \left((\alpha +1) R^3+3 R^2 \sigma _X^2-3 (\alpha -1) R \sigma _X^4+(\alpha -1)^2 \sigma _X^6\right)}{\left(R^2+2 R \sigma _X^2-\left((\alpha -1) \sigma _X^4\right)\right){}^3}\\
        & C_{2,2}=\frac{d \sigma _X^4 \left(R+\sigma _X^2\right){}^4 \left((\alpha +1) R^2-2 (\alpha -1) R \sigma _X^2+(\alpha -1)^2 \sigma _X^4\right)}{\left(R^2+2 R \sigma _X^2-\left((\alpha -1) \sigma _X^4\right)\right){}^5}\\
    \end{aligned}
\end{equation}

\begin{equation}
    \begin{aligned}
        & C_{2,3}=\frac{1}{\left(R^2+2 R \sigma _X^2-\left((\alpha -1) \sigma _X^4\right)\right){}^7}d \sigma _X^4 \left(R+\sigma _X^2\right){}^5 ((\alpha +1) R^4+3 ((\alpha -2) \alpha +2) R^2 \sigma _X^4\\
        &\hspace{8cm}-(\alpha -4) R^3 \sigma _X^2+(\alpha -4) (\alpha -1) R \sigma _X^6+(\alpha -1)^2 (\alpha +1) \sigma _X^8)\\
        & C_{3,1}=\frac{1}{\left(R^2+2 R \sigma _X^2-\left((\alpha -1) \sigma _X^4\right)\right){}^5}(d \sigma _X^6 ((\alpha  (\alpha +3)+1) R^6+(\alpha -1)^2 (\alpha +15) R^2 \sigma _X^8\\
        &+4 (\alpha -1) ((\alpha -1) \alpha -5) R^3 \sigma _X^6
         +(\alpha  ((\alpha -12) \alpha +6)+15) R^4 \sigma _X^4-2 ((\alpha -5) \alpha -3) R^5 \sigma _X^2\\
         &\hspace{10cm}-6 (\alpha -1)^3 R \sigma _X^{10}+(\alpha -1)^4 \sigma _X^{12}))\\
        & C_{3,2}=\frac{1}{\left(R^2+2 R \sigma _X^2-\left((\alpha -1) \sigma _X^4\right)\right){}^7}d \sigma _X^6 \left(R+\sigma _X^2\right){}^5 ((\alpha  (\alpha +3)+1) R^4+3 \left(\alpha ^3-3 \alpha +2\right) R^2 \sigma _X^4\\
        &\hspace{6cm}+2 \left(-3 \alpha ^2+\alpha +2\right) R^3 \sigma _X^2-4 (\alpha -1)^3 R \sigma _X^6+(\alpha -1)^4 \sigma _X^8)\\
        & C_{3,3}=\frac{1}{\left(R^2+2 R \sigma _X^2-\left((\alpha -1) \sigma _X^4\right)\right){}^9}(d \sigma _X^6 \left(R+\sigma _X^2\right){}^6 ((\alpha  (\alpha +3)+1) R^6
        +6 \left(-\alpha ^2+\alpha +1\right) R^5 \sigma _X^2\\
        & \hspace{6cm}+3 (\alpha -1)^2 (\alpha  (2 \alpha -3)+5) R^2 \sigma _X^8
        -4 (\alpha -1) (2 (\alpha -2) \alpha +5) R^3 \sigma _X^6\\
        &\hspace{6cm}+3 (\alpha  (2 (\alpha -1) \alpha -3)+5) R^4 \sigma _X^4-6 (\alpha -1)^3 R \sigma _X^{10}+(\alpha -1)^4 (\alpha +1) \sigma _X^{12}))
    \end{aligned}
\end{equation}

In terms of these symbols we have the following explicit formula
\begin{equation}\label{identity4}
    \begin{aligned}
     \Tr \langle \hat{\Sigma}_{\theta_1}\rangle =&\alpha \Delta_T C_{1,1}+e^{2T}(B_2-2B_1)\\
        \Tr \langle \hat{\Sigma}_{\theta_1}^2\rangle = & \alpha^2 \Delta_T^2C_{2,2}+e^{4T}B_4+4e^{4T}B_2\\
    &+2\alpha \Delta_Te^{2T}\Rh^2 C_{1,3}-4\alpha \Delta_Te^{2T}\Rh C_{1,2}-4e^{4T}B_3
     +2\alpha \Delta_Te^{2T} C_{1,1}
    \end{aligned}
\end{equation}
As a summary our final expression for variance term in KL divergence is given by (\ref{KL2}) along with (\ref{identity1}),(\ref{identity2}),(\ref{identity3}) and (\ref{identity4}). 
Since the expression is fairly complicated we won't present explicit formula for it. To understand the implications of the formula we look at ridgeless limit $\hat{R}\to 0$. 

Putting all the results together, for $\alpha<1$, the ridgeless formula takes the following form
\begin{equation}\label{KL2f}
    \begin{aligned}
     \langle \text{KL}(\rho_G|\rho)_{\text{var}} \rangle
     =& \frac{1}{4}(e^{-2T} +\sigma^{-2}\Delta_T)^2 (2 e^{2T} \frac{\alpha}{1-\alpha}\Delta_T  d \ \sigma_X^{-2}+ \frac{\alpha^2}{(1-\alpha)^3}\Delta_T^2   d \ \sigma_X^{-4}) \\
     &+\frac{1}{2}(e^{-2T} +\sigma^{-2}\Delta_T)(\sigma^{-2}\Delta_Te^{2T})( \frac{\alpha}{1-\alpha} \Delta_T d \  \sigma_X^{-2} )+\frac{d}{4}(\sigma^{-2}e^{2T}\Delta_T)^2\\
     =&\frac{d\alpha  \hat{\lambda } e^{-4 T} \left(e^{2 T}-1\right)}{2 (1-\alpha)}+ \frac{d\hat{\lambda }^2 e^{-8 T} \left(e^{2 T}-1\right)^2 \left(\alpha ^2+(1-\alpha)^3 e^{4 T}+4 \alpha  (1-\alpha)^2 e^{2 T}\right)}{4 (1-\alpha)^3}
    \end{aligned}
\end{equation}
If we further consider $\alpha \to 0$ approximation we see that $\langle \text{KL}(\rho_G|\rho)_{\text{var}} \rangle \propto d \lambda^2$ to the leading order. Also note that $\langle \text{KL}(\rho_G|\rho)_{\text{var}} \rangle/d$ is an increasing function of $\alpha$ in this regime.

Ridgeless limit for $\alpha>1$ is more involved and it is given by
\begin{equation}\label{KL2fm}
    \begin{aligned}
     \langle \text{KL}(\rho_G|\rho)_{\text{var}} \rangle
     =& \frac{1}{4}(e^{-2T} +\sigma^{-2}\Delta_T)^2 (2 e^{2T} \frac{1}{(\alpha-1)}\Delta_T  d \ \sigma_X^{-2}+ \frac{\alpha^2}{(\alpha-1)^3}\Delta_T^2   d \ \sigma_X^{-4}+e^{4T}d\bigg(1-\frac{1}{\alpha}\bigg)) \\
     &+\frac{1}{2}(e^{-2T} +\sigma^{-2}\Delta_T)(\sigma^{-2}\Delta_Te^{2T})( \frac{1}{\alpha-1} \Delta_T d \  \sigma_X^{-2} -e^{2T}d\bigg(1-\frac{1}{\alpha}\bigg))+\frac{d}{4}(\sigma^{-2}e^{2T}\Delta_T)^2\\
     =&  d\frac{\alpha -1}{4 \alpha }+ \frac{d\hat{\lambda } e^{-4 T} \left(e^{2 T}-1\right)}{2 (\alpha -1)}+\frac{d\hat{\lambda }^2 e^{-8 T} \left(e^{2 T}-1\right)^2 \left(\alpha ^3+(\alpha -1)^3 e^{4 T}+4 \alpha  (\alpha -1)^2 e^{2 T}\right)}{4 (\alpha -1)^3 \alpha }
    \end{aligned}
\end{equation}
In this domain $\langle \text{KL}(\rho_G|\rho)_{\text{var}} \rangle/d$ is no longer a monotonic function of $\alpha$.
It is easy to see from the expression above that as $\alpha \to 1$ both from $\alpha>1$ and $\alpha<1$ side, KL divergence becomes unbounded.


\begin{thebibliography}{80}
\providecommand{\natexlab}[1]{#1}
\providecommand{\url}[1]{\texttt{#1}}
\expandafter\ifx\csname urlstyle\endcsname\relax
  \providecommand{\doi}[1]{doi: #1}\else
  \providecommand{\doi}{doi: \begingroup \urlstyle{rm}\Url}\fi

\bibitem[Adlam and Pennington(2020)]{adlam2020neural}
Ben Adlam and Jeffrey Pennington.
\newblock The neural tangent kernel in high dimensions: Triple descent and a multi-scale theory of generalization.
\newblock In \emph{International Conference on Machine Learning}, pages 74--84. PMLR, 2020.

\bibitem[Advani et~al.(2020)Advani, Saxe, and Sompolinsky]{advani2020high}
Madhu~S Advani, Andrew~M Saxe, and Haim Sompolinsky.
\newblock High-dimensional dynamics of generalization error in neural networks.
\newblock \emph{Neural Networks}, 132:\penalty0 428--446, 2020.

\bibitem[Albergo et~al.(2023)Albergo, Boffi, and Vanden-Eijnden]{albergo2023stochastic}
Michael~S. Albergo, Nicholas~M. Boffi, and Eric Vanden-Eijnden.
\newblock Stochastic interpolants: A unifying framework for flows and diffusions, 2023.

\bibitem[Atanasov et~al.(2023)Atanasov, Bordelon, Sainathan, and Pehlevan]{atanasov2023the}
Alexander Atanasov, Blake Bordelon, Sabarish Sainathan, and Cengiz Pehlevan.
\newblock The onset of variance-limited behavior for networks in the lazy and rich regimes.
\newblock In \emph{The Eleventh International Conference on Learning Representations}, 2023.
\newblock URL \url{https://openreview.net/forum?id=JLINxPOVTh7}.

\bibitem[Atanasov et~al.(2024)Atanasov, Zavatone-Veth, and Pehlevan]{atanasov2024scalingrenormalizationhighdimensionalregression}
Alexander Atanasov, Jacob~A. Zavatone-Veth, and Cengiz Pehlevan.
\newblock Scaling and renormalization in high-dimensional regression, 2024.
\newblock URL \url{https://arxiv.org/abs/2405.00592}.

\bibitem[Bach(2024)]{bach2024high}
Francis Bach.
\newblock High-dimensional analysis of double descent for linear regression with random projections.
\newblock \emph{SIAM Journal on Mathematics of Data Science}, 6\penalty0 (1):\penalty0 26--50, 2024.

\bibitem[Bahri et~al.(2021)Bahri, Dyer, Kaplan, Lee, and Sharma]{bahri2021explaining}
Yasaman Bahri, Ethan Dyer, Jared Kaplan, Jaehoon Lee, and Utkarsh Sharma.
\newblock Explaining neural scaling laws.
\newblock \emph{arXiv preprint arXiv:2102.06701}, 2021.

\bibitem[Benton et~al.(2024)Benton, Bortoli, Doucet, and Deligiannidis]{benton2024nearlydlinearconvergencebounds}
Joe Benton, Valentin~De Bortoli, Arnaud Doucet, and George Deligiannidis.
\newblock Nearly $d$-linear convergence bounds for diffusion models via stochastic localization, 2024.
\newblock URL \url{https://arxiv.org/abs/2308.03686}.

\bibitem[Bordelon and Pehlevan(2022)]{bordelon2022selfconsistentdynamicalfieldtheory}
Blake Bordelon and Cengiz Pehlevan.
\newblock Self-consistent dynamical field theory of kernel evolution in wide neural networks.
\newblock \emph{Advances in Neural Information Processing Systems}, 35:\penalty0 32240--32256, 2022.

\bibitem[Bordelon and Pehlevan(2024)]{bordelon2023dynamicsfinitewidthkernel}
Blake Bordelon and Cengiz Pehlevan.
\newblock Dynamics of finite width kernel and prediction fluctuations in mean field neural networks.
\newblock \emph{Advances in Neural Information Processing Systems}, 36, 2024.

\bibitem[Bordelon et~al.(2020)Bordelon, Canatar, and Pehlevan]{bordelon2020spectrum}
Blake Bordelon, Abdulkadir Canatar, and Cengiz Pehlevan.
\newblock Spectrum dependent learning curves in kernel regression and wide neural networks.
\newblock In \emph{International Conference on Machine Learning}, pages 1024--1034. PMLR, 2020.

\bibitem[Bordelon et~al.(2021)Bordelon, Canatar, and Pehlevan]{bordelon2021spectrumdependentlearningcurves}
Blake Bordelon, Abdulkadir Canatar, and Cengiz Pehlevan.
\newblock Spectrum dependent learning curves in kernel regression and wide neural networks, 2021.
\newblock URL \url{https://arxiv.org/abs/2002.02561}.

\bibitem[Bradley and Nakkiran(2024)]{bradley2024classifierfreeguidancepredictorcorrector}
Arwen Bradley and Preetum Nakkiran.
\newblock Classifier-free guidance is a predictor-corrector, 2024.
\newblock URL \url{https://arxiv.org/abs/2408.09000}.

\bibitem[Canatar et~al.(2021{\natexlab{a}})Canatar, Bordelon, and Pehlevan]{Canatar_2021}
Abdulkadir Canatar, Blake Bordelon, and Cengiz Pehlevan.
\newblock Spectral bias and task-model alignment explain generalization in kernel regression and infinitely wide neural networks.
\newblock \emph{Nature Communications}, 12\penalty0 (1), May 2021{\natexlab{a}}.
\newblock ISSN 2041-1723.
\newblock \doi{10.1038/s41467-021-23103-1}.
\newblock URL \url{http://dx.doi.org/10.1038/s41467-021-23103-1}.

\bibitem[Canatar et~al.(2021{\natexlab{b}})Canatar, Bordelon, and Pehlevan]{canatar2021spectral}
Abdulkadir Canatar, Blake Bordelon, and Cengiz Pehlevan.
\newblock Spectral bias and task-model alignment explain generalization in kernel regression and infinitely wide neural networks.
\newblock \emph{Nature communications}, 12\penalty0 (1):\penalty0 2914, 2021{\natexlab{b}}.

\bibitem[Carlini et~al.(2023)Carlini, Hayes, Nasr, Jagielski, Sehwag, Tramer, Balle, Ippolito, and Wallace]{carlini2023extracting}
Nicolas Carlini, Jamie Hayes, Milad Nasr, Matthew Jagielski, Vikash Sehwag, Florian Tramer, Borja Balle, Daphne Ippolito, and Eric Wallace.
\newblock Extracting training data from diffusion models.
\newblock In \emph{32nd USENIX Security Symposium (USENIX Security 23)}, pages 5253--5270, 2023.

\bibitem[Chen et~al.(2023{\natexlab{a}})Chen, Lee, and Lu]{chen2023improvedanalysisscorebasedgenerative}
Hongrui Chen, Holden Lee, and Jianfeng Lu.
\newblock Improved analysis of score-based generative modeling: User-friendly bounds under minimal smoothness assumptions, 2023{\natexlab{a}}.
\newblock URL \url{https://arxiv.org/abs/2211.01916}.

\bibitem[Chen et~al.(2023{\natexlab{b}})Chen, Chewi, Li, Li, Salim, and Zhang]{chen2023samplingeasylearningscore}
Sitan Chen, Sinho Chewi, Jerry Li, Yuanzhi Li, Adil Salim, and Anru~R. Zhang.
\newblock Sampling is as easy as learning the score: theory for diffusion models with minimal data assumptions, 2023{\natexlab{b}}.
\newblock URL \url{https://arxiv.org/abs/2209.11215}.

\bibitem[Chen(2023)]{chen2023importancenoiseschedulingdiffusion}
Ting Chen.
\newblock On the importance of noise scheduling for diffusion models, 2023.
\newblock URL \url{https://arxiv.org/abs/2301.10972}.

\bibitem[Chidambaram et~al.(2024)Chidambaram, Gatmiry, Chen, Lee, and Lu]{chidambaram2024doesguidancedofinegrained}
Muthu Chidambaram, Khashayar Gatmiry, Sitan Chen, Holden Lee, and Jianfeng Lu.
\newblock What does guidance do? a fine-grained analysis in a simple setting, 2024.
\newblock URL \url{https://arxiv.org/abs/2409.13074}.

\bibitem[d'Ascoli et~al.(2020)d'Ascoli, Sagun, and Biroli]{d2020triple}
St{\'e}phane d'Ascoli, Levent Sagun, and Giulio Biroli.
\newblock Triple descent and the two kinds of overfitting: Where \& why do they appear?
\newblock \emph{Advances in Neural Information Processing Systems}, 33:\penalty0 3058--3069, 2020.

\bibitem[Demirtas et~al.(2023)Demirtas, Halverson, Maiti, Schwartz, and Stoner]{demirtas2023neuralnetworkfieldtheories}
Mehmet Demirtas, James Halverson, Anindita Maiti, Matthew~D. Schwartz, and Keegan Stoner.
\newblock Neural network field theories: Non-gaussianity, actions, and locality, 2023.
\newblock URL \url{https://arxiv.org/abs/2307.03223}.

\bibitem[Dhariwal and Nichol(2021)]{dhariwal2021diffusionmodelsbeatgans}
Prafulla Dhariwal and Alex Nichol.
\newblock Diffusion models beat gans on image synthesis, 2021.
\newblock URL \url{https://arxiv.org/abs/2105.05233}.

\bibitem[Dhifallah and Lu(2020)]{dhifallah2020precise}
Oussama Dhifallah and Yue~M Lu.
\newblock A precise performance analysis of learning with random features.
\newblock \emph{arXiv preprint arXiv:2008.11904}, 2020.

\bibitem[Dicker(2016)]{dicker2016minimax}
Lee~H. Dicker.
\newblock {Ridge regression and asymptotic minimax estimation over spheres of growing dimension}.
\newblock \emph{Bernoulli}, 22\penalty0 (1):\penalty0 1 -- 37, 2016.
\newblock \doi{10.3150/14-BEJ609}.
\newblock URL \url{https://doi.org/10.3150/14-BEJ609}.

\bibitem[Dobriban and Wager(2018)]{dobriban2018prediction}
Edgar Dobriban and Stefan Wager.
\newblock {High-dimensional asymptotics of prediction: Ridge regression and classification}.
\newblock \emph{The Annals of Statistics}, 46\penalty0 (1):\penalty0 247 -- 279, 2018.
\newblock \doi{10.1214/17-AOS1549}.
\newblock URL \url{https://doi.org/10.1214/17-AOS1549}.

\bibitem[Dosovitskiy et~al.(2021)Dosovitskiy, Beyer, Kolesnikov, Weissenborn, Zhai, Unterthiner, Dehghani, Minderer, Heigold, Gelly, Uszkoreit, and Houlsby]{dosovitskiy2021an}
Alexey Dosovitskiy, Lucas Beyer, Alexander Kolesnikov, Dirk Weissenborn, Xiaohua Zhai, Thomas Unterthiner, Mostafa Dehghani, Matthias Minderer, Georg Heigold, Sylvain Gelly, Jakob Uszkoreit, and Neil Houlsby.
\newblock An image is worth 16x16 words: Transformers for image recognition at scale.
\newblock In \emph{International Conference on Learning Representations}, 2021.
\newblock URL \url{https://openreview.net/forum?id=YicbFdNTTy}.

\bibitem[D’Ascoli et~al.(2020)D’Ascoli, Refinetti, Biroli, and Krzakala]{d2020double}
Stéphane D’Ascoli, Maria Refinetti, Giulio Biroli, and Florent Krzakala.
\newblock Double {Trouble} in {Double} {Descent}: {Bias} and {Variance}(s) in the {Lazy} {Regime}.
\newblock In \emph{International {Conference} on {Machine} {Learning}}, pages 2280--2290. PMLR, November 2020.
\newblock URL \url{http://proceedings.mlr.press/v119/d-ascoli20a.html}.
\newblock ISSN: 2640-3498.

\bibitem[Favero et~al.(2025)Favero, Sclocchi, Cagnetta, Frossard, and Wyart]{favero2025compositional}
Alessandro Favero, Antonio Sclocchi, Francesco Cagnetta, Pascal Frossard, and Matthieu Wyart.
\newblock How compositional generalization and creativity improve as diffusion models are trained.
\newblock \emph{arXiv preprint arXiv:2502.12089}, 2025.

\bibitem[Hastie et~al.(2022)Hastie, Montanari, Rosset, and Tibshirani]{hastie2022surprises}
Trevor Hastie, Andrea Montanari, Saharon Rosset, and Ryan~J Tibshirani.
\newblock Surprises in high-dimensional ridgeless least squares interpolation.
\newblock \emph{The Annals of Statistics}, 50\penalty0 (2):\penalty0 949--986, 2022.

\bibitem[Ho and Salimans(2022)]{ho2022classifierfreediffusionguidance}
Jonathan Ho and Tim Salimans.
\newblock Classifier-free diffusion guidance, 2022.
\newblock URL \url{https://arxiv.org/abs/2207.12598}.

\bibitem[Ho et~al.(2020)Ho, Jain, and Abbeel]{ho2020denoising}
Jonathan Ho, Ajay Jain, and Pieter Abbeel.
\newblock Denoising diffusion probabilistic models.
\newblock \emph{Advances in neural information processing systems}, 33:\penalty0 6840--6851, 2020.

\bibitem[Hu and Lu(2022)]{hu2022universality}
Hong Hu and Yue~M Lu.
\newblock Universality laws for high-dimensional learning with random features.
\newblock \emph{IEEE Transactions on Information Theory}, 69\penalty0 (3):\penalty0 1932--1964, 2022.

\bibitem[Huang et~al.(2021)Huang, Lim, and Courville]{huang2021variationalperspectivediffusionbasedgenerative}
Chin-Wei Huang, Jae~Hyun Lim, and Aaron Courville.
\newblock A variational perspective on diffusion-based generative models and score matching, 2021.
\newblock URL \url{https://arxiv.org/abs/2106.02808}.

\bibitem[Hyv{{\"a}}rinen(2005)]{JMLR:v6:hyvarinen05a}
Aapo Hyv{{\"a}}rinen.
\newblock Estimation of non-normalized statistical models by score matching.
\newblock \emph{Journal of Machine Learning Research}, 6\penalty0 (24):\penalty0 695--709, 2005.
\newblock URL \url{http://jmlr.org/papers/v6/hyvarinen05a.html}.

\bibitem[Jabri et~al.(2023)Jabri, Fleet, and Chen]{jabri2023scalableadaptivecomputationiterative}
Allan Jabri, David Fleet, and Ting Chen.
\newblock Scalable adaptive computation for iterative generation, 2023.
\newblock URL \url{https://arxiv.org/abs/2212.11972}.

\bibitem[Jacot et~al.(2018)Jacot, Gabriel, and Hongler]{NEURIPS2018_5a4be1fa}
Arthur Jacot, Franck Gabriel, and Clement Hongler.
\newblock Neural tangent kernel: Convergence and generalization in neural networks.
\newblock In S.~Bengio, H.~Wallach, H.~Larochelle, K.~Grauman, N.~Cesa-Bianchi, and R.~Garnett, editors, \emph{Advances in Neural Information Processing Systems}, volume~31. Curran Associates, Inc., 2018.
\newblock URL \url{https://proceedings.neurips.cc/paper_files/paper/2018/file/5a4be1fa34e62bb8a6ec6b91d2462f5a-Paper.pdf}.

\bibitem[Jacot et~al.(2020)Jacot, Gabriel, and Hongler]{jacot2020neuraltangentkernelconvergence}
Arthur Jacot, Franck Gabriel, and Clément Hongler.
\newblock Neural tangent kernel: Convergence and generalization in neural networks, 2020.
\newblock URL \url{https://arxiv.org/abs/1806.07572}.

\bibitem[Kadkhodaie and Simoncelli(2020)]{kadkhodaie2020solving}
Zahra Kadkhodaie and Eero~Peter Simoncelli.
\newblock Solving linear inverse problems using the prior implicit in a denoiser.
\newblock In \emph{NeurIPS 2020 Workshop on Deep Learning and Inverse Problems}, 2020.
\newblock URL \url{https://openreview.net/forum?id=RLN7K4U3UST}.

\bibitem[Kadkhodaie et~al.(2023)Kadkhodaie, Guth, Simoncelli, and Mallat]{kadkhodaie2023generalization}
Zahra Kadkhodaie, Florentin Guth, Eero~P Simoncelli, and St{\'e}phane Mallat.
\newblock Generalization in diffusion models arises from geometry-adaptive harmonic representations.
\newblock \emph{arXiv preprint arXiv:2310.02557}, 2023.

\bibitem[Kamb and Ganguli(2024)]{kamb2024analytic}
Mason Kamb and Surya Ganguli.
\newblock An analytic theory of creativity in convolutional diffusion models.
\newblock \emph{arXiv preprint arXiv:2412.20292}, 2024.

\bibitem[Karras et~al.(2022)Karras, Aittala, Aila, and Laine]{karras2022elucidatingdesignspacediffusionbased}
Tero Karras, Miika Aittala, Timo Aila, and Samuli Laine.
\newblock Elucidating the design space of diffusion-based generative models, 2022.
\newblock URL \url{https://arxiv.org/abs/2206.00364}.

\bibitem[Krogh and Hertz(1992)]{krogh1992generalization}
Anders Krogh and John~A Hertz.
\newblock Generalization in a linear perceptron in the presence of noise.
\newblock \emph{Journal of Physics A: Mathematical and General}, 25\penalty0 (5):\penalty0 1135, 1992.

\bibitem[Lee et~al.(2023)Lee, Lu, and Tan]{lee2023convergencescorebasedgenerativemodeling}
Holden Lee, Jianfeng Lu, and Yixin Tan.
\newblock Convergence for score-based generative modeling with polynomial complexity, 2023.
\newblock URL \url{https://arxiv.org/abs/2206.06227}.

\bibitem[Lee et~al.(2019)Lee, Xiao, Schoenholz, Bahri, Novak, Sohl-Dickstein, and Pennington]{NEURIPS2019_0d1a9651}
Jaehoon Lee, Lechao Xiao, Samuel Schoenholz, Yasaman Bahri, Roman Novak, Jascha Sohl-Dickstein, and Jeffrey Pennington.
\newblock Wide neural networks of any depth evolve as linear models under gradient descent.
\newblock In H.~Wallach, H.~Larochelle, A.~Beygelzimer, F.~d\textquotesingle Alch\'{e}-Buc, E.~Fox, and R.~Garnett, editors, \emph{Advances in Neural Information Processing Systems}, volume~32. Curran Associates, Inc., 2019.
\newblock URL \url{https://proceedings.neurips.cc/paper_files/paper/2019/file/0d1a9651497a38d8b1c3871c84528bd4-Paper.pdf}.

\bibitem[Li and Chen(2024)]{li2024critical}
Marvin Li and Sitan Chen.
\newblock Critical windows: non-asymptotic theory for feature emergence in diffusion models.
\newblock In \emph{International Conference on Machine Learning}, pages 27474--27498. PMLR, 2024.

\bibitem[Louart et~al.(2018)Louart, Liao, and Couillet]{louart2018random}
Cosme Louart, Zhenyu Liao, and Romain Couillet.
\newblock A random matrix approach to neural networks.
\newblock \emph{The Annals of Applied Probability}, 28\penalty0 (2):\penalty0 1190--1248, 2018.

\bibitem[Loureiro et~al.(2021)Loureiro, Gerbelot, Cui, Goldt, Krzakala, Mezard, and Zdeborov{\'a}]{loureiro2021learning}
Bruno Loureiro, Cedric Gerbelot, Hugo Cui, Sebastian Goldt, Florent Krzakala, Marc Mezard, and Lenka Zdeborov{\'a}.
\newblock Learning curves of generic features maps for realistic datasets with a teacher-student model.
\newblock \emph{Advances in Neural Information Processing Systems}, 34:\penalty0 18137--18151, 2021.

\bibitem[Maloney et~al.(2022)Maloney, Roberts, and Sully]{maloney2022solvable}
Alexander Maloney, Daniel~A Roberts, and James Sully.
\newblock A solvable model of neural scaling laws.
\newblock \emph{arXiv preprint arXiv:2210.16859}, 2022.

\bibitem[Mei et~al.(2018)Mei, Montanari, and Nguyen]{Mei_2018}
Song Mei, Andrea Montanari, and Phan-Minh Nguyen.
\newblock A mean field view of the landscape of two-layer neural networks.
\newblock \emph{Proceedings of the National Academy of Sciences}, 115\penalty0 (33), July 2018.
\newblock ISSN 1091-6490.
\newblock \doi{10.1073/pnas.1806579115}.
\newblock URL \url{http://dx.doi.org/10.1073/pnas.1806579115}.

\bibitem[Mei et~al.(2022)Mei, Misiakiewicz, and Montanari]{mei2022generalization}
Song Mei, Theodor Misiakiewicz, and Andrea Montanari.
\newblock Generalization error of random feature and kernel methods: hypercontractivity and kernel matrix concentration.
\newblock \emph{Applied and Computational Harmonic Analysis}, 59:\penalty0 3--84, 2022.

\bibitem[Nakkiran(2019)]{nakkiran2019more}
Preetum Nakkiran.
\newblock More data can hurt for linear regression: Sample-wise double descent.
\newblock \emph{arXiv preprint arXiv:1912.07242}, 2019.

\bibitem[Nichol and Dhariwal(2021)]{nichol2021improveddenoisingdiffusionprobabilistic}
Alex Nichol and Prafulla Dhariwal.
\newblock Improved denoising diffusion probabilistic models, 2021.
\newblock URL \url{https://arxiv.org/abs/2102.09672}.

\bibitem[Okawa et~al.(2023)Okawa, Lubana, Dick, and Tanaka]{okawa2023compositional}
Maya Okawa, Ekdeep~Singh Lubana, Robert~P. Dick, and Hidenori Tanaka.
\newblock Compositional abilities emerge multiplicatively: Exploring diffusion models on a synthetic task.
\newblock \emph{arXiv preprint arXiv:2310.09336}, 2023.

\bibitem[OpenAI(2024)]{sora}
OpenAI.
\newblock Sora.
\newblock 2024.
\newblock URL \url{https://openai.com/index/sora/}.

\bibitem[Park et~al.(2024)Park, Okawa, Lee, Lubana, and Tanaka]{park2024emergence}
Core~Francisco Park, Maya Okawa, Andrew Lee, Ekdeep~S Lubana, and Hidenori Tanaka.
\newblock Emergence of hidden capabilities: Exploring learning dynamics in concept space.
\newblock \emph{Advances in Neural Information Processing Systems}, 37:\penalty0 84698--84729, 2024.

\bibitem[Peebles and Xie(2023)]{peebles2023scalable}
William Peebles and Saining Xie.
\newblock Scalable diffusion models with transformers.
\newblock In \emph{Proceedings of the IEEE/CVF International Conference on Computer Vision}, pages 4195--4205, 2023.

\bibitem[Ramesh et~al.(2022)Ramesh, Dhariwal, Nichol, Chu, and Chen]{ramesh2022hierarchicaltextconditionalimagegeneration}
Aditya Ramesh, Prafulla Dhariwal, Alex Nichol, Casey Chu, and Mark Chen.
\newblock Hierarchical text-conditional image generation with clip latents, 2022.
\newblock URL \url{https://arxiv.org/abs/2204.06125}.

\bibitem[Roberts et~al.(2022)Roberts, Yaida, and Hanin]{Roberts_2022}
Daniel~A. Roberts, Sho Yaida, and Boris Hanin.
\newblock \emph{The Principles of Deep Learning Theory: An Effective Theory Approach to Understanding Neural Networks}.
\newblock Cambridge University Press, May 2022.
\newblock ISBN 9781316519332.
\newblock \doi{10.1017/9781009023405}.
\newblock URL \url{http://dx.doi.org/10.1017/9781009023405}.

\bibitem[Rombach et~al.(2022)Rombach, Blattmann, Lorenz, Esser, and Ommer]{rombach2022highresolutionimagesynthesislatent}
Robin Rombach, Andreas Blattmann, Dominik Lorenz, Patrick Esser, and Björn Ommer.
\newblock High-resolution image synthesis with latent diffusion models, 2022.
\newblock URL \url{https://arxiv.org/abs/2112.10752}.

\bibitem[Ronneberger et~al.(2015)Ronneberger, Fischer, and Brox]{ronneberger2015unetconvolutionalnetworksbiomedical}
Olaf Ronneberger, Philipp Fischer, and Thomas Brox.
\newblock U-net: Convolutional networks for biomedical image segmentation, 2015.
\newblock URL \url{https://arxiv.org/abs/1505.04597}.

\bibitem[Saharia et~al.(2022)Saharia, Chan, Saxena, Li, Whang, Denton, Ghasemipour, Ayan, Mahdavi, Lopes, Salimans, Ho, Fleet, and Norouzi]{saharia2022photorealistictexttoimagediffusionmodels}
Chitwan Saharia, William Chan, Saurabh Saxena, Lala Li, Jay Whang, Emily Denton, Seyed Kamyar~Seyed Ghasemipour, Burcu~Karagol Ayan, S.~Sara Mahdavi, Rapha~Gontijo Lopes, Tim Salimans, Jonathan Ho, David~J Fleet, and Mohammad Norouzi.
\newblock Photorealistic text-to-image diffusion models with deep language understanding, 2022.
\newblock URL \url{https://arxiv.org/abs/2205.11487}.

\bibitem[Shah et~al.(2023)Shah, Chen, and Klivans]{shah2023learningmixturesgaussiansusing}
Kulin Shah, Sitan Chen, and Adam Klivans.
\newblock Learning mixtures of gaussians using the ddpm objective, 2023.
\newblock URL \url{https://arxiv.org/abs/2307.01178}.

\bibitem[Simon et~al.(2023)Simon, Dickens, Karkada, and Deweese]{simon2023eigenlearning}
James~B Simon, Madeline Dickens, Dhruva Karkada, and Michael Deweese.
\newblock The eigenlearning framework: A conservation law perspective on kernel ridge regression and wide neural networks.
\newblock \emph{Transactions on Machine Learning Research}, 2023.

\bibitem[Sohl-Dickstein et~al.(2015)Sohl-Dickstein, Weiss, Maheswaranathan, and Ganguli]{sohldickstein2015deepunsupervisedlearningusing}
Jascha Sohl-Dickstein, Eric~A. Weiss, Niru Maheswaranathan, and Surya Ganguli.
\newblock Deep unsupervised learning using nonequilibrium thermodynamics, 2015.
\newblock URL \url{https://arxiv.org/abs/1503.03585}.

\bibitem[Sollich(1998)]{sollich1998learning}
Peter Sollich.
\newblock Learning curves for {G}aussian processes.
\newblock \emph{Advances in neural information processing systems}, 11, 1998.

\bibitem[Sollich and Halees(2002)]{sollich2002learning}
Peter Sollich and Anason Halees.
\newblock Learning curves for {G}aussian process regression: Approximations and bounds.
\newblock \emph{Neural computation}, 14\penalty0 (6):\penalty0 1393--1428, 2002.

\bibitem[Somepalli et~al.(2022)Somepalli, Singla, Goldblum, Geiping, and Goldstein]{somepalli2022diffusion}
G~Somepalli, V~Singla, M~Goldblum, J~Geiping, and T~Goldstein.
\newblock Diffusion art or digital forgery.
\newblock \emph{Investigating Data Replication in Diffusion Models}, 2022.

\bibitem[Song et~al.(2021{\natexlab{a}})Song, Meng, and Ermon]{song2021denoising}
Jiaming Song, Chenlin Meng, and Stefano Ermon.
\newblock Denoising diffusion implicit models.
\newblock In \emph{International Conference on Learning Representations}, 2021{\natexlab{a}}.
\newblock URL \url{https://openreview.net/forum?id=St1giarCHLP}.

\bibitem[Song and Ermon(2019)]{song2019generative}
Yang Song and Stefano Ermon.
\newblock Generative modeling by estimating gradients of the data distribution.
\newblock \emph{Advances in neural information processing systems}, 32, 2019.

\bibitem[Song et~al.(2021{\natexlab{b}})Song, Sohl-Dickstein, Kingma, Kumar, Ermon, and Poole]{song2021scorebased}
Yang Song, Jascha Sohl-Dickstein, Diederik~P. Kingma, Abhishek Kumar, Stefano Ermon, and Ben Poole.
\newblock Score-based generative modeling through stochastic differential equations, 2021{\natexlab{b}}.

\bibitem[Spigler et~al.(2020)Spigler, Geiger, and Wyart]{spigler2020asymptotic}
Stefano Spigler, Mario Geiger, and Matthieu Wyart.
\newblock Asymptotic learning curves of kernel methods: empirical data versus teacher--student paradigm.
\newblock \emph{Journal of Statistical Mechanics: Theory and Experiment}, 2020\penalty0 (12):\penalty0 124001, 2020.

\bibitem[Tu et~al.(2022)Tu, Talebi, Zhang, Yang, Milanfar, Bovik, and Li]{tu2022maxvit}
Zhengzhong Tu, Hossein Talebi, Han Zhang, Feng Yang, Peyman Milanfar, Alan Bovik, and Yinxiao Li.
\newblock Maxvit: Multi-axis vision transformer.
\newblock In \emph{European conference on computer vision}, pages 459--479. Springer, 2022.

\bibitem[Vincent(2011)]{Vincent}
Pascal Vincent.
\newblock A connection between score matching and denoising autoencoders.
\newblock \emph{Neural Computation}, 23\penalty0 (7):\penalty0 1661--1674, 2011.
\newblock \doi{10.1162/NECO_a_00142}.

\bibitem[Wu et~al.(2024)Wu, Chen, Li, Wang, and Wei]{wu2024theoreticalinsightsdiffusionguidance}
Yuchen Wu, Minshuo Chen, Zihao Li, Mengdi Wang, and Yuting Wei.
\newblock Theoretical insights for diffusion guidance: A case study for gaussian mixture models, 2024.
\newblock URL \url{https://arxiv.org/abs/2403.01639}.

\bibitem[Yang and Hu(2021)]{pmlr-v139-yang21c}
Greg Yang and Edward~J. Hu.
\newblock Tensor programs iv: Feature learning in infinite-width neural networks.
\newblock In Marina Meila and Tong Zhang, editors, \emph{Proceedings of the 38th International Conference on Machine Learning}, volume 139 of \emph{Proceedings of Machine Learning Research}, pages 11727--11737. PMLR, 18--24 Jul 2021.
\newblock URL \url{https://proceedings.mlr.press/v139/yang21c.html}.

\bibitem[Yoon et~al.(2023)Yoon, Choi, Kwon, and Ryu]{yoon2023diffusion}
TaeHo Yoon, Joo~Young Choi, Sehyun Kwon, and Ernest~K. Ryu.
\newblock Diffusion probabilistic models generalize when they fail to memorize.
\newblock In \emph{ICML 2023 Workshop on Structured Probabilistic Inference {\&} Generative Modeling}, 2023.
\newblock URL \url{https://openreview.net/forum?id=shciCbSk9h}.

\bibitem[Zavatone-Veth and Pehlevan(2023)]{zavatone2023learning}
Jacob~A Zavatone-Veth and Cengiz Pehlevan.
\newblock Learning curves for deep structured {G}aussian feature models.
\newblock In \emph{Advances in Neural Information Processing Systems}, 2023.

\bibitem[Zhang et~al.(2023)Zhang, Zhou, Lu, Guo, Wang, Shen, and Qu]{zhang2023emergence}
Huijie Zhang, Jinfan Zhou, Yifu Lu, Minzhe Guo, Peng Wang, Liyue Shen, and Qing Qu.
\newblock The emergence of reproducibility and generalizability in diffusion models.
\newblock \emph{arXiv preprint arXiv:2310.05264}, 2023.

\bibitem[Zhang and Chen(2021)]{zhang2021diffusionnormalizingflow}
Qinsheng Zhang and Yongxin Chen.
\newblock Diffusion normalizing flow, 2021.
\newblock URL \url{https://arxiv.org/abs/2110.07579}.

\end{thebibliography}
\end{document}